\documentclass[english, 11pt]{article}
\usepackage[round]{natbib}
\usepackage{color}
\usepackage{authblk}
\usepackage[colorlinks=true, citecolor=magenta,linkcolor=blue]{hyperref}
\usepackage{subfiles}
\usepackage{amsmath}
\usepackage{amsfonts}
\usepackage{algorithm, algpseudocode}
\usepackage{amsthm}
\usepackage{graphicx}
\usepackage{caption}
\usepackage{subcaption}
\usepackage{cleveref} 

\newtheorem{thm}{Theorem}%
\newtheorem{lemma}[thm]{Lemma}
\newtheorem{prop}[thm]{Proposition}
\theoremstyle{definition}
\bibliographystyle{apalike}
\newtheorem{defn}[thm]{Definition} 
\newcommand{\sdp}{s_{\mathrm{dp}}}
\newcommand{\mrd}{\mathrm{d}}
\newcommand{\kl}[2]{\mathcal{D}_{\mathrm{KL}}\left(#1\|#2\right)}

\newcommand{\fighspacea}{-0.7em} %
\renewcommand{\eqref}[1]{Equation~(\ref{#1})}

\usepackage[a4paper]{geometry}
\geometry{verbose,tmargin=2.5cm,bmargin=2.5cm,lmargin=2.5cm,rmargin=2.5cm}

\title{Simulation-based Bayesian Inference from Privacy Protected Data}

\author{Yifei Xiong$^1$ , Nianqiao Phyllis Ju$^1$\thanks{Corresponding author: nianqiao@purdue.edu} ,  Sanguo Zhang$^2$}
\date{%
    $^1$Department of Statistics, Purdue University\\
    $^2$University of Chinese Academy of Sciences\\%
}

\begin{document} 
\maketitle

\begin{abstract}
Many modern statistical analysis and machine learning applications require training models on sensitive user data.
Under a formal definition of privacy protection, 
differentially private algorithms inject calibrated noise into the confidential data or during the data analysis process to produce privacy-protected datasets or queries.
However, restricting access to only privatized data during statistical analysis makes it computationally challenging to make valid statistical inferences.
In this work, we propose simulation-based inference methods from privacy-protected datasets. 
In addition to sequential Monte Carlo approximate Bayesian computation, we adopt neural conditional density estimators as a flexible family of distributions to approximate the posterior distribution of model parameters given the observed private query results.
We illustrate our methods on discrete time-series data under an infectious disease model and with ordinary linear regression models. 
Illustrating the privacy-utility trade-off, our experiments and analysis demonstrate the necessity and feasibility of designing valid statistical inference procedures to correct for biases introduced by the privacy-protection mechanisms.
\end{abstract}

\section{Introduction} \label{section:intro}

\paragraph{Motivation.}
Many AI systems require collecting and training on massive amounts of personal information (such as income, disease status, location, purchase history, etc.). 
Despite unprecedented data collection efforts by companies, governments, researchers, and other agencies, oftentimes, data collectors have to lock the data inside their own databases due to privacy concerns. 
Differential privacy (DP) provides a mathematical definition for the protection of individual data~\citep{drechsler2024handbook}.
Under this framework, privacy-protecting procedures (i.e., DP algorithms) have enabled data collectors, such as tech companies, the US Census Bureau, and social scientists, to share research data in a wide variety of settings while protecting the privacy of individual users. 
Privacy researchers typically collect confidential data and then inject calibrated random noise into the confidential data to achieve the desired levels of privacy protection. 
Some algorithms aim for DP data analysis, resulting in DP optimizations~\citep{arora2023faster,bassily2021differentially}, approximations~\citep{chaudhuri2013near,bie2023private}, or predictions~\citep{rho2023differentially}; while other algorithms produce DP datasets (or descriptive statistics), which enables data sharing across research teams and entities. 
Examples of the latter include the 2020 US Census~\citep{Abowd20222020,gong2022harnessing,drechsler2023differential}, the Facebook URL dataset~\citep{evans2023statistically}, and New York Airbnb Open Data~\citep{guo2023data}.
Our work tackles this data-sharing regime: we aim to make valid statistical inferences with privatized data, and we place a special focus on using complex models such as continuous-time Markov jump processes.  

Although the DP data-sharing regime corrupts confidential information in order to satisfy privacy, since the probabilistic design of such mechanisms can be publicly known, in principle analysts can still conduct reliable estimation and uncertainty quantification by accounting for bias and noises introduced during privatization. 
However, in practice, valid inference based on privatized data is a challenge that requires the revision of existing statistical methods designed originally for confidential data \citep{foulds2016theory}. 
Even for well-understood procedures such as ordinary linear regression and generalized linear models, 
adding the extra layer of privacy protection
has introduced new theoretical and methodological questions in statistics~\citep{cai2021cost,alabi2022hypothesis,li2023estimation,barrientos2019differentially}.

However, for complex models, even the confidential data likelihood functions are intractable or time-consuming to compute. 
Accounting for privacy noise on top of that is a formidable challenge. 
Without developing valid inference procedures under this regime, we must either make biased estimations or restrict ourselves to simple models. 
In this work, 
we propose methods to estimate the parameters of complex models that underlie privacy-protected data.

\paragraph{Related works.}
There is a fast-growing literature on statistical inference under differential privacy. 
Early work by \citet{williams2010probabilistic} first proposed using Bayesian inference to handle DP noise. 
Since then, various Bayesian methods have been developed for privatized data. 
Markov Chain Monte Carlo (MCMC) methods have been proposed in some specific models and priors,
such as exponential family distributions~\citep{bernstein2018differentially}, Bayesian linear regression~\citep{bernstein2019differentially} and generalized linear models~\citep{kulkarni2021differentially}.
As for generic algorithms, 
\citet{ju2022data} proposed a data-augmentation MCMC strategy to overcome the intractable marginal likelihood resulting from privatization, and~\citet{gong2022exact} derived point estimates of the posterior distribution using the Expectation-Maximization algorithm.
Several frequentist inference methods have also been developed. 
\citet{karwa2015private} employed a parametric bootstrap method to construct confidence intervals for the model parameters of the log-linear model.
\citet{awan2023simulationbased} proposes simulation-based inference methods for hypothesis testing and confidence intervals, for frequentist inference.

To the best of our knowledge, only three papers~\citep{waites2021differentially,lee2022differentially,su2023differentially} have incorporated normalizing flows~\citep{kobyzev2020normalizing,papamakario2021normalizing} and DP, and 
both works design DP-versions of normalizing flow. 
In contrast, our work uses flow-based methods as a neural density estimation tool to analyze DP-protected data.

\paragraph{Our contributions.}
In this work, we propose several likelihood-free inference methods that make statistical inferences from privacy-protected data.
First, we highlight that sequential Monte Carlo approximate Bayesian computation (SMC-ABC) can be used for this purpose, which improves the current practice of using ABC~\citep{gong2022exact}. Next, we propose sequential private posterior estimation (SPPE) and sequential private likelihood estimation (SPLE), two sequential neural density estimation methods to approximate the private data posterior distribution with normalizing flows. 
Unlike likelihood-based methods to learn from private data,  
SMC-ABC, SPPE, and SPLE require only simulations from a generative model for confidential data, and hence are applicable to more models.
We demonstrate the efficiency of our methods on an infectious disease model using synthetic and several real disease outbreak data. 
SPPE and SPLE require fewer numbers of simulations from the data model to achieve the same inference results compared with SMC-ABC.
We also propose a privacy mechanism for the release of the infection curve.
Our experiments also demonstrate the privacy-utility trade-off in linear regression, where we compared the proposed likelihood-free methods with data augmentation MCMC, a likelihood-based inference method.
The code is available on GitHub\footnote{\url{https://github.com/Yifei-Xiong/Simulation-based-Bayesian-Inference-from-Privacy-Protected-Data}}.

\section{Background and challenges} \label{section:bg}
Let $\theta\in\Theta$ be the model parameter and $x = (x_1, \cdots, x_n) \in \mathbb{X}^n$ represent the confidential database, where $n$ is the sample size. We model the database with some likelihood function $f( x \mid \theta)$.
This confidential data model can be a `simulator' whose likelihood can be impossible to compute. 
In the confidential data setting, our interest is to approximate the posterior distribution $\pi(\theta \mid x^o) \propto \pi(\theta) f(x^o \mid \theta)$ where $x^o$ is the observed dataset and $\pi(\theta)$ is the prior.
In this section, we first review some key approaches for likelihood-free inference in the confidential data setting and then describe output perturbation mechanisms to generate differentially private queries $\sdp$. 
Finally, we explain why learning $\theta$ given observed $\sdp^o$ is challenging. We will present new methods for likelihood-free inference in the privatized data setting in Section~\ref{section:method}.

\subsection{Likelihood-free inference for confidential data}\label{sec:confidentialdata}
Likelihood-free inference methods require only the ability to generate data from a simulator model, instead of evaluating its likelihood. 

\paragraph{Approximate Bayesian Computation (ABC) and sequential Monte Carlo ABC (SMC-ABC).}
ABC-based methods circumvent likelihood evaluations with simulations. 
In its most basic form, a set of parameters are sampled from the prior $\theta^{(i)} \sim \pi(\theta)$ and a data set $x^{(i)}$ is simulated from the model for each $\theta^{(i)}$. 
The samples $\theta^{(i)}$ such that $x^{(i)}$ is close to the observed data $x^{o}$ provide an approximation to the posterior $\pi(\theta \mid x^o).$ It is also common to choose some summary statistics $s$ and to retain samples according to distance between each $s(x^{(i)})$ and $s(x^o)$.
The efficiency of ABC is controlled by the closeness between the prior distribution, which can be viewed as its proposal distribution, and the posterior distribution, which is the target distribution.
SMC-ABC~\citep{sisson2007sequential,beaumont2009adaptive} improves ABC by specifying a sequence of intermediate target distributions between the prior and the posterior, and thus the $\theta^{(i)}$ samples can gradually evolve towards the target distribution. The sequential updates in SMC-ABC propagate and reweight the parameter samples with importance sampling. 

\paragraph{Neural density estimation.}
In contrast to sampling-based approximations like ABC and SMC-ABC, we can pursue density approximation of $\pi(\theta \mid x^o)$ with neural networks. 
We illustrate a neural estimator for the posterior density $\pi(\theta \mid x)$.
Let $\{q_\phi(\theta \mid x)\}_{\phi}$ denote the collection of densities representable by an architecture $q$ and weight parameters $\phi$. We can train the neural estimator to minimize an ideal loss function
\begin{equation*}
\hat{\phi} = \arg\min_{\phi}\mathbb{E}_{p(\theta,x)}\left[ - \log q_{\phi}(\theta \mid x) \right] = \arg\min_{\phi} \int_{\Theta \times \mathbb{X}^n} \left[ - \log q_{\phi}(\theta \mid x)  \right] \cdot \pi(\theta) f(x \mid \theta) \mathrm{d}\theta \mathrm{d}x.
\end{equation*}
This loss is an expected value of $ - \log q_{\phi}$ under the joint distribution $p(\theta, x) = \pi(\theta) f(x \mid \theta)$.
When $f(x\mid \theta)$ is intractable, the expectation is also intractable, and training relies on Monte Carlo approximations of the integral using samples $\{(\theta^{(j)},x^{(j)})\}_{i=1}^{N}$.
In sequential neural density estimation, we can sequentially improve the Monte Carlo estimation of the loss function~\citep{papamakarios2016fast, lueckmann2017flexible, apt}.
Later, in Section~\ref{sec:rqmc}, we will use randomized quasi-Monte Carlo (RQMC) methods~\citep{owen1997monte,owen1997scrambled} as a drop-in replacement for standard Monte Carlo to reduce variance.
Other neural density approaches include approximating likelihood functions~\citep{papamakarios2019sequential} or likelihood ratios~\citep{miller2022contrastive}.
We refer the readers to \citep{cranmer2020frontier,lueckmann2021benchmarking} for systematic reviews of neural density approximations.

A notable neural density estimation method is normalizing flows (NF)~\citep{dinh2016density, papamakarios2017masked, durkan2019neural,papamakario2021normalizing}, which are adopted in our experiments. NF start with a simple base density (e.g. multivariate Normal) and push it through a sequence of invertible and differentiable transformations. The resulting distribution is richer and more complex. We choose NF because of their expressive power and popularity in probabilistic modeling.

\subsection{Differential privacy}\label{sec:dp}
Given confidential data $x$, let $\eta$ be a randomized algorithm to produce a `differentially private statistic' $\sdp$ from $x$. We also use $\eta(\sdp \mid x)$ to denote the conditional density of the private output $\sdp$ given the confidential data $x$.
Intuitively, a procedure is private when perturbing one individual's response in the dataset leads to only a small change in the algorithm's outcome. 
This is characterized by the $\epsilon$-DP definition from~\citet{dwork2006calibrating}, based on neighboring databases. 
\begin{defn}[$\epsilon$-DP]\label{def:epsilon-dp}
We say $x,x' \in \mathbb{X}^n$ are `neighboring' databases if they differ by one and only one data record. Denote this by $d(x, x^\prime)\le 1$. 
A privacy mechanism with conditional density $\eta$ satisfies $\epsilon$-DP if, for all possible values of $\sdp$ and for neighboring datasets, the following probability ratio is bounded:
\begin{equation}
\frac{\int_A \eta(\sdp \mid x) \ \mrd \sdp}{\int_A \eta(\sdp \mid x^\prime)\ \mrd\sdp}\le \exp(\epsilon),\quad \forall A \subseteq \mathrm{Range}(\eta).
\end{equation}
\end{defn}
The parameter $\epsilon$ is referred to as the \textit{privacy loss budget}, and it plays a pivotal role in determining the extent to which $\sdp$ discloses information about $x$: Larger values of $\epsilon$ correspond to reduced privacy guarantees, whereas $\epsilon=0$ signifies perfect privacy.

Output perturbation methods achieve privacy by first computing a query $s: \mathbb{X}^n \to \mathbb{S}$ (such as mean, median, histogram, contingency tables) %
of the database and then releasing $s(x)$ with added noise.
To satisfy $\epsilon$-DP, the query $s$ must have finite sensitivity.
\begin{defn}[Global sensitivity~\citep{nissim2007smooth}]
The $L_p$ sensitivity of a function $s$, denoted $\Delta_p(s)$, is the maximum $L_p$-norm change in the function’s value between neighboring databases $x$ and $x'$, namely
\begin{equation*}
\Delta_p(s)= \max_{d(x,x') = 1} \|s(x) - s(x') \|_p\ .
\end{equation*}
\end{defn}
\vspace{-0.5em}
A common output perturbation technique is the Laplace mechanism. 
\begin{prop}[Laplace mechanism~\citep{dwork2006calibrating}]
For a real-valued query $s: \mathbb{X}^n \to \mathbb{S}$, adding zero-centered Laplace noise with parameter $\Delta_1(s) / \epsilon$ achieves $\epsilon$-DP.
\end{prop}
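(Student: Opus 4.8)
The plan is to reduce the set-based privacy condition in Definition~\ref{def:epsilon-dp} to a pointwise bound on the ratio of conditional densities, which is the standard route for output-perturbation mechanisms. Since $s$ is real-valued, the Laplace mechanism produces $\sdp = s(x) + Z$, where $Z$ is drawn from a zero-centered Laplace distribution with scale $b = \Delta_1(s)/\epsilon$, so the conditional density is $\eta(\sdp \mid x) = \frac{1}{2b}\exp(-|\sdp - s(x)|/b)$.

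First I would form the pointwise density ratio between two neighboring databases $x$ and $x'$:
\begin{equation*}
\frac{\eta(\sdp \mid x)}{\eta(\sdp \mid x')} = \exp\!\left(\frac{|\sdp - s(x')| - |\sdp - s(x)|}{b}\right).
\end{equation*}
The exponent is controlled by the reverse triangle inequality, which gives $|\sdp - s(x')| - |\sdp - s(x)| \le |s(x) - s(x')|$. Because $x$ and $x'$ are neighboring, the right-hand side is at most $\Delta_1(s)$ by the definition of global sensitivity. Substituting the chosen scale $b = \Delta_1(s)/\epsilon$ then yields the uniform bound $\eta(\sdp \mid x) \le e^{\epsilon}\, \eta(\sdp \mid x')$, valid for every value of $\sdp$.

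Finally, I would integrate this pointwise inequality over an arbitrary measurable set $A \subseteq \mathrm{Range}(\eta)$ to obtain
\begin{equation*}
\int_A \eta(\sdp \mid x)\,\mrd\sdp \le e^{\epsilon} \int_A \eta(\sdp \mid x')\,\mrd\sdp,
\end{equation*}
which is exactly the bound demanded by Definition~\ref{def:epsilon-dp}. Since the definition quantifies over all neighboring pairs, swapping the roles of $x$ and $x'$ is automatically covered, so the density ratio is two-sided as needed.

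The single step carrying all the work is the reverse triangle inequality that couples the query's sensitivity to the exponent of the Laplace density; everything else is bookkeeping. The one point I would be careful to state explicitly is that Definition~\ref{def:epsilon-dp} is phrased with integrals over sets rather than a raw pointwise ratio, so I would note that a uniform pointwise domination of one density by $e^{\epsilon}$ times the other transfers directly to the integrated inequality by monotonicity of the integral, sidestepping any division-by-zero concern when $A$ has negligible mass under $\eta(\cdot \mid x')$.
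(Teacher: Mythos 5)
Your proof is correct and is the standard argument for the Laplace mechanism: pointwise density-ratio bound via the reverse triangle inequality and the definition of $L_1$ sensitivity, then integration over measurable sets to match the set-based form of Definition~\ref{def:epsilon-dp}. The paper itself states this proposition as a cited result from \citet{dwork2006calibrating} without supplying a proof, and your argument faithfully reproduces the canonical one, including the correct handling of the two-sided bound by symmetry of the neighboring relation.
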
 
\citet{chaudhuri2013near} provides a multivariate version of the Laplace mechanism. Other mechanisms include the exponential mechanism and the Gaussian mechanism~\citep{liu2018generalized}. There are also relaxations of $\epsilon$-DP, such as $(\epsilon,\delta)$-DP and Gaussian DP~\citep{dong2022gaussian}.

We exploit output perturbation on summary statistics in two ways.
From an inference perspective, when $s(x)$ is a sufficient statistic for $\theta$, the confidential data posterior $\pi(\theta \mid x)$ is fully characterized by $s(x)$ and the prior. This is the rationale for choosing $\pi(\theta \mid \sdp)$ as the target distribution and for the posterior from perturbed data to be still informative about the model parameters.
From a computation perspective, 
since many summary statistics are of lower dimension than the confidential data, this can facilitate efficient computations. In Section~\ref{sec:rqmc}, we leverage the low dimensionality of $\sdp$ by incorporating quasi-Monte Carlo techniques.

\subsection{Bayesian inference with privatized queries}
In the private data setting, our goal is to approximate the posterior distribution of $\theta$ given privatized data $\sdp.$
A key challenge with data analysis on privatized data via $\pi(\theta \mid \sdp)$ is the intractable private data marginal likelihood. 
\begin{equation}
    \label{eqn:marginal-likelihood}
    f(\sdp \mid \theta) = \int_{\mathbb{X}^n} f(x \mid \theta) \eta(\sdp \mid x) \mrd x. 
\end{equation}
When the confidential data likelihood $f(x \mid \theta)$ can be evaluated, \citet{ju2022data} proposed a data-augmentation MCMC algorithm to approximate the doubly-intractable private data posterior 
\begin{equation}\label{eqn:private-posterior}
    \pi(\theta \mid \sdp) \propto f(\sdp \mid \theta) \pi(\theta).
\end{equation}
The data augmentation strategy circumvents the intractability of evaluating \eqref{eqn:marginal-likelihood} by working with the joint posterior distribution 
$p(\theta, x \mid \sdp) \propto  f(x\mid\theta)\pi(\theta)\eta(\sdp\mid x)$ 
instead of the marginal posterior in \eqref{eqn:private-posterior}. 

In this work, we study the challenging scenarios when the confidential likelihood $f(x \mid \theta)$ 
is intractable.
This situation is `triply intractable', as there are three levels of intractability in $f(x \mid \theta)$, $f(\sdp \mid \theta)$, and $\pi(\theta \mid \sdp).$
Data augmentation MCMC, which includes an inference step to sample from $\pi(\theta \mid x)$, is no longer applicable in the `triply intractable' setting.

\section{Proposed methods} \label{section:method}
We present methods for likelihood-free inference for the triply intractable posterior distribution $\pi(\theta \mid \sdp).$
First and foremost, we recognize that SMC-ABC methods are viable solutions to approximate the private data posterior, as long as the privacy mechanism is publicly known and can be replicated by the data analyst. 

Next, under the same assumptions, we focus on two neural density approximation methods that leverage state-of-the-art simulation-based inference methods and can be more efficient than the SMC-ABC baseline. 
We present two complementary approaches: 
a) \Cref{sec:pdl}: approximate the private data marginal likelihood $f(s_{\mathrm{dp}} \mid \theta)$ in \eqref{eqn:marginal-likelihood}, and b) \Cref{sec:pdp}: approximate $\pi(\theta \mid s_{\mathrm{dp}})$ from \eqref{eqn:private-posterior} directly. 

\subsection{Adapting SMC-ABC to the private data setting}\label{sec:smcabc}
A data analyst can create a (hierarchical) private data simulator for $\sdp$ by combining the confidential data simulator $x \sim f(\cdot \mid \theta)$ and a tractable output perturbation mechanism (e.g. the Laplace mechanism) with $\sdp \sim \eta(\cdot \mid x)$. With this simulator for $\sdp$, a rejection ABC algorithm can simulate parameter samples $\{\theta^{(i)}\}_{i=1,\ldots,N}$ from the prior, and then keep samples correspond to a simulated private query $\sdp^{(i)}$ close to the observed value $\sdp^o$. This insight has been exploited in~\citet{gong2022exact}, using ABC to approximate $\pi(\theta \mid \sdp^o$).
We point out that SMC-ABC can also be used for this purpose. We include a description of the algorithm in Appendix~\ref{appendix:smcabc} for completeness.

Both SMC-ABC and ABC aim to target the exact posterior distribution, unlike neural density estimation methods which insist that the approximation come from some parametric family $q_{\phi}$. For this reason, we will use SMC-ABC as a baseline to test and validate our methods in Section~\ref{section:exp}.

\subsection{Private data likelihood estimation}
\label{sec:pdl}
Our first neural density approximation strategy is to approximate the private data marginal likelihood $f(\sdp \mid \theta)$ with a neural network denoted by $q_\phi(\sdp\mid\theta)$. When training $q_{\phi}(\sdp \mid \theta) \approx f(\sdp \mid \theta)$, 
we aim to minimize their average KL divergence under the prior $\pi(\theta)$, corresponding to minimizing
\begin{equation}\label{eqn:ple-objective-1}
   \mathbb{E}_{\pi(\theta)}\left[ \kl{f(\sdp \mid \theta)}{q_\phi(\sdp \mid \theta)} \right].
\end{equation}
After some derivations (in Appendix~\ref{appendix:derivations}),
\eqref{eqn:ple-objective-1} is equivalent to $\mathbb{E}_{p(\theta, \sdp)}\left[- \log q_{\phi}(\sdp \mid \theta) \right]$ up to a constant independent of $\phi$.
The expectation is taken with respect to the prior and the private data generating process 
$p(\theta, \sdp) = \pi(\theta) \cdot f(\sdp \mid \theta)$, which is intractble.
To facilitate computations, 
let's write \eqref{eqn:ple-objective-1}
with respect to the confidential data generating process, resulting in %
\begin{equation}
    \ell_{\mathrm{PLE}}(\phi)=\mathbb{E}_{p(\theta, x)}  \left[- \int_{\mathbb{S}}\eta(\sdp \mid x) \log q_{\phi}(\sdp \mid \theta) \mrd \sdp \right].
    \label{eqn:ple-loss}
\end{equation}
With $\widehat{\phi} = \arg\min \ell_{\mathrm{PLE}}(\phi)$, the resulting private data likelihood estimation is $q_{\widehat{\phi}}(\sdp \mid \theta)$. Since \eqref{eqn:ple-loss} is an expectation with respect to a tractable distribution, we can approximate the integral with Monte Carlo methods, discussed in \Cref{sec:rqmc}.

When the primary inference goal is the maximum likelihood estimator, one can approximate it with 
$\widehat{\theta}_{\textnormal{MLE}} = \arg\max_{\theta} q_{\widehat{\phi}}(\sdp \mid \theta).$
Under the Bayesian paradigm, 
the posterior approximation of \eqref{eqn:private-posterior} can be $\widehat{\pi}_{\textnormal{PLE}}(\theta) \propto \pi(\theta) q_{\widehat{\phi}}(\sdp \mid \theta)$.
Quantities such as posterior median, mean, and credible regions can be estimated accordingly.

\subsection{Private data posterior estimation}\label{sec:pdp}
Now we approximate the private data posterior $\pi(\theta \mid \sdp)$ in \eqref{eqn:private-posterior} directly with a neural posterior estimator $q_\phi(\theta \mid \sdp)$, bypassing the synthetic likelihood step in Section~\ref{sec:pdl}.
To find $q_\phi(\theta \mid \sdp) \approx \pi(\theta \mid \sdp)$,
let us minimize their KL divergence
\begin{equation*}\label{eqn:ppe-objective-0}
    \kl{\pi(\theta\mid\sdp)}{ q_\phi(\theta\mid\sdp)} = \mathbb{E}_{\pi(\theta\mid\sdp)}\left[\log \frac{\pi(\theta\mid\sdp)}{q_\phi(\theta\mid\sdp)}\right].
\end{equation*}
As shown in Section~\ref{appendix:derivations}, this is equivalent to 
\begin{equation}
    \ell_{\mathrm{PPE}}(\phi)=\mathbb{E}_{p(\theta, x)}\left[-\int_{\mathbb{S}} \eta(\sdp \mid x)\log q_\phi(\theta \mid \sdp) \mrd \sdp\right].
    \label{eqn:ppe-loss}
\end{equation}
Many Bayesian problems use uninformative priors $\pi(\theta)$, which are dispersed in the parameter space.
As a result, when employing a naive Monte Carlo strategy that uses the prior as the proposal to approximate expectations in \eqref{eqn:ple-loss} or \eqref{eqn:ppe-loss} has most of its samples falling in low-density regions, leading to inefficient estimation due to high variance. 
To address this issue, importance sampling utilizes a proposal distribution $\tilde{p}$ to change the bases of integration and thus reduce the variance of numerical integration. 
The automatic posterior transformation (APT) method~\citep{apt} uses $\tilde{p}(\theta,x) = \tilde{p}(\theta)f(x \mid \theta)$ as proposal and has (unnormalized) importance weights 
\begin{equation}\label{eqn:apt-weights}
    \tilde q_{\phi}(\theta \mid \sdp) \propto q_{\phi}(\theta \mid \sdp)\frac{\tilde p(\theta)}{\pi(\theta)}.
\end{equation}
Based on this idea, we design a modified loss of \eqref{eqn:ppe-loss} that leverages the APT framework to improve efficiency in posterior estimation. Specifically, we propose the following loss function
\begin{equation}
    \ell_{\mathrm{PPE-A}}(\phi)=\mathbb{E}_{\tilde p(\theta, x)}\left[-\int_{\mathbb{S}} \eta(\sdp\mid x)\log \tilde q_\phi(\theta\mid\sdp) \mrd \sdp\right],
    \label{eqn:ppe-apt-loss}
\end{equation}
and it is useful for the sequential training strategy we introduce in Section~\ref{sec:sequential}. The derivation of \eqref{eqn:ppe-apt-loss} is in Appendix~\ref{appendix:derivations}.

\subsection{Nested RQMC estimators}
\label{sec:rqmc}
We can inspect the general form of loss functions in Equations~(\ref{eqn:ple-loss}) and (\ref{eqn:ppe-apt-loss}) with
\begin{equation}\label{eqn:loss-general}
    \ell(\phi)=-\mathbb{E}_{\tilde p(\theta, x)}\left[\int_{\mathbb{S}} \eta(\sdp \mid x) g(\sdp, \theta)\mrd \sdp \right].
\end{equation}
Here $g(\sdp,\theta) = \log \tilde{q}_{\phi}(\theta \mid \sdp)$ for PPE-A and
$g(\sdp,\theta) = \log q_{\phi}(\sdp \mid \theta)$ for PLE.
Equation \ref{eqn:loss-general} is a double integral,
where the outer expectation is with respect to some proposal distribution $\tilde{p}$ and
the inner integral involves the privacy mechanism $\eta$ and the neural estimator $q_{\phi}$. 

With independent and identically distributed (i.i.d.) samples from the joint density $\{(\theta^{(i)}, x^{(i)})\}_{i=1}^N \sim \tilde p(\theta, x)=\tilde p(\theta) f(x\mid\theta)$,
we can unbiasedly approximate \eqref{eqn:loss-general} with 
\begin{equation}
    \widehat{\ell}(\phi)^{\text{MC}} =-\sum_{i=1}^N\left[\int_{\mathbb{S}} \eta(\sdp \mid x^{(i)}) g(\sdp, \theta^{(i)})\mrd \sdp \right]. 
\end{equation}
The inner integrals $I(\theta, x) = \int \eta(\sdp \mid x) g(\sdp, \theta) \mrd\sdp$ can be approximated with standard Monte Carlo integration techniques,
as popular DP mechanisms such as Laplace and Gaussian mechanisms can be easily simulated. 
Using $M$ i.i.d. samples, the root-mean-squared-error (RMSE) to estimate $I(\theta,x)$ approximations are typically on the order of $\mathcal{O}(M^{-1/2})$ due to the central limit theorem. 

In many applications, the DP query result $\sdp$ serves as a private descriptive statistic of a dataset $x$. This statistic is commonly embedded in a low-dimensional space $\mathbb{S}$, where the dimension $r = \mathrm{dim}(\mathbb{S})$ is significantly less than the dimension of the data space $\mathrm{dim}(\mathbb{X}^n)$. For the privacy mechanisms, we typically model the generation process as $\tau: (u, s(x)) \mapsto \sdp$ where $u \sim \mathcal{U}[0,1]^r$ is a uniform random variable from the $r$-dimensional hypercube. For example, the process $\tau(u,s(x)) = s(x) - \frac{\Delta_1(s)}{\epsilon}\mathrm{sgn}(u - \frac{1}{2})\log\left[1 - 2 |u - \frac{1}{2}|\right]$ achieves $\epsilon-$DP for 1-dimensional queries.

Randomized quasi-Monte Carlo (RQMC) methods, as detailed in \citep{owen1997monte,owen1997scrambled}, differ from traditional Monte Carlo (MC) methods in that it generate correlated, low-discrepancy sequences $\{v^{(1)},\cdots,v^{(M)}\} \subset [0, 1]^r$. These sequences cover the parameter space more evenly than the pseudo-random sequences used by MC methods. This low-discrepancy feature helps to reduce the variance of the estimators, which is particularly useful in low-dimensional integration tasks \citep{l2018randomized}. The estimator used in RQMC can be described as
\begin{equation}
    \hat I^\mathrm{RQMC}_{\theta, x}=\frac{1}{M}\sum_{j=1}^M g(\tau(v^{(j)};x), \theta) := \frac{1}{M}\sum_{j=1}^M \tilde{g}_{\theta, x} (v^{(j)}).
    \label{eqn:rqmc}
\end{equation}
The RQMC estimator is unbiased and has a smaller RMSE compared to MC estimators. 
The $\star$-discrepancy of the point set $\{v^{(1:M)}\}$, denoted by $D^{\star}(v^{(1:M)})$
is of the order $\mathcal{O}(M^{-1}(\log M)^r) = \mathcal{O}(M^{-1+\delta})$ for a positive constant $\delta$.
If our neural approximation family $\tilde{g}_{\theta, x}(\cdot)$ has bounded Hardy-Krause variation $V_{\mathrm{HK}}[\tilde{g}_{\theta, x}(\cdot)]$~\citep{aistleitner2017functions}, then, according to \citet{basu2016transformations}, the mean squared error of the RQMC estimator in \eqref{eqn:rqmc} satisfies
\begin{equation}
    \label{eqn:variance-rqmc}
    \mathbb{E}\left[\left(\hat I_{\theta,x}^\mathrm{RQMC} - I(\theta, x)\right)^2\right]\le V^2_{\mathrm{HK}}(\tilde{g}_{\theta, x})(D^{\star})^2= \mathcal{O}(M^{-2+2\delta}).
\end{equation}
Thus, the RMSE of the RQMC estimator is of the order $\mathcal{O}(M^{-1+\delta})$, achieving faster convergence than an MC estimator with $\mathcal{O}(M^{-1/2})$. 
We verify this improvement in convergence from the RQMC estimator on the SIR model and linear regression example with the neural spline flow approximation family \citep{durkan2019neural}, in Appendix~\ref{appendix:exp_detail}, Figure~\ref{fig:rqmcmc}.

\subsection{Sequential neural estimations} 
\label{sec:sequential}
This section presents our sequential neural approximation methods on privacy-protected data. 
Our central goal is to approximate the private data posterior distribution $\pi(\theta \mid \sdp)$ given observed privatized data query $\sdp$. 
We summarize the Sequential Private Posterior Estimation (SPPE) algorithm in Algo.\ref{alg:sppe} and the Sequential Private Likelihood Estimation (SPLE) in Algo.\ref{alg:sple}.

\begin{algorithm}[htb]
\caption{Sequential private-data posterior estimation (SPPE)}\label{alg:sppe}
    \begin{algorithmic}
    \State \textbf{Input}: observed privatized summary statistics $\sdp^o$, neural estimation family $q_\phi(\theta \mid \sdp)$, and confidential data simulator $f(x \mid \theta)$
    \State \textbf{Initialization}: set $\tilde p_0(\theta) = \pi(\theta)$, simulated data filtration $\mathcal{D}_0=\{\}$
    \For{$r=1,2,\cdots,R$}
    \State Sample $\{\theta^{(i)}\}_{i=1:N}$ from $\tilde p_{r-1}(\theta)$
    \State Simulate $x^{(i)}\sim f(\cdot  \mid \theta^{(i)})$ for each $i$
    \State Update filtration $\mathcal{D}_{r} = \mathcal{D}_{r-1} \cup \{(\theta^{(i)}, x^{(i)})\}_{i={1:N}}$
    \State Update $\phi \gets \arg\min_{\phi} \hat \ell_{\mathrm{PPE-A}}(\phi)$ using $\hat{I}^{\mathrm{RQMC}}$ \eqref{eqn:rqmc} on $\mathcal{D}_r$
    \State Set proposal $\tilde p_r(\theta) = q_\phi(\theta \mid \sdp^o)$ 
    \EndFor \\
    \Return $\hat \pi(\theta \mid \sdp^o) =  q_\phi(\theta \mid \sdp^o)$
    \end{algorithmic}
\end{algorithm}

\begin{algorithm}[htbp]
\caption{Sequential private-data likelihood estimation (SPLE)}\label{alg:sple}
\begin{algorithmic}
\State \textbf{Input}: observed privatized summary statistics $\sdp^o$, neural estimation family $q_\phi(\sdp \mid \theta)$, and confidential data simulator $f(x \mid \theta)$
\State \textbf{Initialization}: set $\tilde p_0(\theta) := \pi(\theta)$, simulated data filtration $\mathcal{D}_0=\{\}$
\For{$r=1,2,\cdots,R$}
\State Sample $\{\theta^{(i)}\}_{i=1}^N$ from $\tilde p_{r-1}(\theta)$.%
\State Simulate $x^{(i)}\sim f(\cdot  \mid \theta^{(i)})$ for each $i$
\State Update filtration $\mathcal{D}_{r} = \mathcal{D}_{r-1} \cup \{(\theta^{(i)}, x^{(i)})\}_{i={1:N}}$
\State Update $\phi \gets \mathop{\arg\min}_{\phi} \hat \ell_{\mathrm{PLE}}(\phi)$ using $\hat{I}^{\mathrm{RQMC}}$ \eqref{eqn:rqmc} on $\mathcal{D}_r$
\State Set proposal $\tilde p_r(\theta) \propto \pi(\theta)q_\phi(\sdp^o \mid \theta)$ 
\EndFor \\
\Return posterior estimation $\widehat{\pi}(\theta \mid \sdp^o) = \tilde p_R(\theta)$ and likelihood estimation $\hat{f}_{\theta}(\sdp \mid \theta) = q_{\phi^{\star}}(\sdp \mid \theta)$
\end{algorithmic}
\end{algorithm}

Both SPPE and SPLE use normalizing flows as the variational family to minimize some KL divergence, which takes the general form of \eqref{eqn:loss-general}. 
We have designed their sequential training procedures to be sample efficient, in the sense that training data generated during previous rounds are kept and used in subsequent rounds. 

In a sequential approximation procedure, 
we iteratively refine the neural approximations towards the target distribution. 
After the $r$-th training round, 
we incorporate the current neural density estimator $q_{\phi}$ into the proposal distribution of the next training round, using the automatic posterior transformation weights and loss functions described in Equations~(\ref{eqn:apt-weights}) and (\ref{eqn:ppe-apt-loss}) respectively. 
Sequential training procedures can gradually move $q_{\phi}$ towards high-density regions of the private data posterior, and thus achieve good accuracy with fewer samples from the simulator, compared with non-sequential training procedures that use a fixed proposal distribution.

\section{Applications} \label{section:exp}
Here we illustrate our methods on the susceptible-infected-recovered (SIR) model and linear regression. We include experiments on the Na\"ive Bayes log-linear model in the Appendix.
\subsection{SIR model for disease spread}
The SIR model is a time-series model that describes how an infectious disease spreads in a closed population. 
It is most often used as a deterministic ordinary differential equation (ODE), but can also be represented by a Markov jump process. 

To the best of our knowledge, inference on privacy-protected data with the SIR model has not been studied in the literature. 
Our proposed methods are particularly suitable for this problem for two reasons.
First, our methods are simulation-based and thus are applicable under the ODE model, when other likelihood-based methods can no longer be applied. 
Second, in the SIR model, low-dimensional summary statistics can be very informative about model parameters. 
Then the RQMC methods discussed in Section~\ref{sec:rqmc} can provide efficiency and accuracy gains %
when evaluating the loss functions.

We describe a stochastic SIR model in a closed population with $K$ people.
As the disease spreads, the individuals progress through the three states: susceptible, infected, and recovered. 
We use $S(t), I(t)$, and $R(t)$ to denote the number of individuals within each compartment at time $t$. 
We make the following assumptions: (a) individuals are infected at a rate $\beta\frac{SI}{K}$, resulting in a decrease of $S$ by one and an increase of $I$ by one, (b) infected individuals recover with a rate $\gamma I$, leading to a decrease of $I$ by one and an increase of $R$ by one.
The confidential data likelihood of this continuous-time Markov jump process is hard to compute. 
Our goal is to infer the infection and recovery rates $\theta = (\beta,\gamma)$, under initial conditions $(S,I,R) = (K-1,1,0)$.

\paragraph{Privatizing the infection curve.}
Here, we propose a mechanism to privatize the infection trajectory $I(t) / K$, which is the proportion of infected individuals at each $t$. 

\begin{prop}[DP infection trajectory]\label{prop:dp_sir}
Consider a sequence of $L$ points $\{t_1,\cdots,t_L\}$ in the time interval $[0, T]$, our privatized query can be
$\sdp = (s_1,\cdots,s_L)$ where each $s_i \sim \mathrm{Binomial}\left(n, \frac{I(t_i) + m}{K + 2m}\right)$ independently.
The mechanism generating $\sdp = (s_1,\cdots,s_L)$ satisfies $\epsilon$-DP, with $\epsilon = \frac{n}{m}L.$
\end{prop}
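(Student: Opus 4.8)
The plan is to verify the $\epsilon$-DP condition of Definition~\ref{def:epsilon-dp} directly, exploiting the fact that the mechanism is a product of independent binomials supported on the finite set $\{0,1,\dots,n\}^L$. First I would reduce the set-wise ratio in the definition to a pointwise likelihood ratio: because $\sdp$ is discrete, $\int_A \eta(\sdp\mid x)\,\mrd\sdp = \sum_{\sdp \in A}\eta(\sdp\mid x)$, so if every pointwise ratio $\eta(\sdp\mid x)/\eta(\sdp\mid x')$ is bounded by $e^\epsilon$, then the ratio of the sums over any $A$ is as well. It therefore suffices to bound the pointwise ratio uniformly.

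Next I would establish the sensitivity of the infection count. Two neighboring databases differ in the record of a single individual, and since any one individual contributes at most one unit to the infected compartment at any instant, $|I(t_i) - I'(t_i)| \le 1$ for every $i \in \{1,\dots,L\}$. Writing $p_i = (I(t_i)+m)/(K+2m)$ and $p_i' = (I'(t_i)+m)/(K+2m)$ for the two success probabilities, independence across the $L$ time points factorizes the likelihood ratio as
\begin{equation*}
\frac{\eta(\sdp\mid x)}{\eta(\sdp\mid x')} = \prod_{i=1}^L \left(\frac{p_i}{p_i'}\right)^{s_i}\left(\frac{1-p_i}{1-p_i'}\right)^{n-s_i}.
\end{equation*}

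The core step is to bound the success and failure ratios separately and uniformly in $s_i$. Using $I(t_i) \le I'(t_i)+1$ and $I'(t_i)\ge 0$, the success ratio obeys $p_i/p_i' = (I(t_i)+m)/(I'(t_i)+m) \le 1 + 1/(I'(t_i)+m) \le 1 + 1/m \le e^{1/m}$; the identical bound holds for the failure ratio $(1-p_i)/(1-p_i') = (K+m-I(t_i))/(K+m-I'(t_i))$ once one notes $K+m-I'(t_i)\ge m$ because $I'(t_i)\le K$. Hence each factor is at most $(e^{1/m})^{s_i}(e^{1/m})^{n-s_i} = e^{n/m}$, a bound free of $s_i$, and multiplying over the $L$ points gives $\eta(\sdp\mid x)/\eta(\sdp\mid x') \le e^{nL/m} = \exp(\epsilon)$ with $\epsilon = \tfrac{n}{m}L$.

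The bookkeeping is routine; the step that most deserves care is the sensitivity claim $|I(t_i)-I'(t_i)|\le 1$, which is where the modeling convention for a single data record enters, together with the observation that both ratios admit the same $s_i$-independent bound so the per-coordinate exponent $n/m$ does not depend on the realized count. It is worth recording that $I'(t_i)+m\ge m>0$ and $K+m-I'(t_i)\ge m>0$ keep the denominators away from zero, and that swapping $x$ and $x'$ supplies the matching lower bound $e^{-\epsilon}$, so the mechanism is two-sided $\epsilon$-DP as required.
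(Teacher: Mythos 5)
Your proof is correct and follows essentially the same route as the paper's: factorize the likelihood ratio over the $L$ independent binomial draws, use the sensitivity bound $|I(t_i)-I'(t_i)|\le 1$, and bound each per-coordinate ratio by $e^{n/m}$. The only difference is cosmetic — you bound the success and failure ratios each by $e^{1/m}$ uniformly, whereas the paper splits into the three cases $\tilde I(t_i) = I(t_i)\pm 1$ or $\tilde I(t_i)=I(t_i)$ and notes that one factor is at most $1$ in each case; both yield the same bound $\left(\frac{1+m}{m}\right)^{n}\le e^{n/m}$ per time point, and your explicit reduction of the set-wise ratio to a pointwise one is a detail the paper leaves implicit.
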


This algorithm adds calibrated noise to the SIR process to produce $\sdp$, a differential private time series. It can probably protect each individual's infection status. We demonstrate that analysts can still make inferences about population parameters $(\beta,\gamma)$ by only knowing $\sdp$, which retains information about the speed of disease spread.

\paragraph{Experiments on synthetic data.}
We illustrate the performance of SPPE and SPLE on synthetic privatized SIR model data. The data generating parameters are set to emulate a measles outbreak. 
We set $L=10$ time points with $n=1000$ and $m=1000$, achieving $\epsilon=10$ differential privacy. Lower privacy loss budgets ($\epsilon=1,0.1$) are explored in Appendix~\ref{sec:detailed_results_on_synthetic_data} by adjusting $m$ accordingly.
We also describe prior specifications and implementation details in the Appendix.

Figure \ref{fig:sir}-A describes the convergence of the posterior approximations $\hat{\pi}(\theta \mid \sdp^o)$ towards the SMC-ABC~\citep{beaumont2009adaptive} baseline, requiring up to $5\times 10^5$ simulations. 
We use SMC-ABC as the baseline because it does not resort to the variational approximations employed by SPPE and SPLE. 
Both SPPE and SPLE quickly adapt to meet the SMC-ABC results, with orders of magnitudes fewer simulations needed.
After the first round of simulations, SPPE can identify the high probability region of $\pi(\theta \mid \sdp)$, while the SPLE posterior is still exploring the parameter space. 
See Appendix for the SPLE approximation after $r = 1.$ 
After $r = 5$ rounds, both methods can concentrate around the posterior mean and have captured the posterior correlation $\mathrm{cov}(\beta,\gamma \mid \sdp).$
By inspecting marginal posterior histograms (Figure \ref{fig:sir_hist}), we find that, in this example, the posterior approximated by SPPE is slightly more concentrated than those from SMC-ABC and SPLE.

To quantitatively evaluate the performance of our methods, we use the following metrics: 
(1) MMD~\citep{gretton2012kernel}: maximum mean discrepancy between the neural estimated posteriors and SMC-ABC posterior; 
(2) C2ST~\citep{lopez2016revisiting}: classifier two sample tests; 
(3) NLOG: negative log density at true parameters; 
and (4) LMD: log median distance from simulated to observed $\sdp.$ 
Smaller values indicate better performance for all four metrics.
In Figure \ref{fig:sir}-B, we compare the performance of these methods at various numbers of simulation samples/rounds. 
After Round 5, SPLE and SPPE have similar accuracy. 
We also compare their runtime 
in Table~\ref{tab:cost} of the Appendix.
To achieve MMD lower than 0.1, SPPE is 6x faster and SPLE is 2x faster than SMC-ABC.

\begin{figure}[t]
\hspace{0.0\textwidth}
\begin{subfigure}[t]{0.433\textwidth}
    \vspace{0pt}
    \includegraphics[width = 1\textwidth]{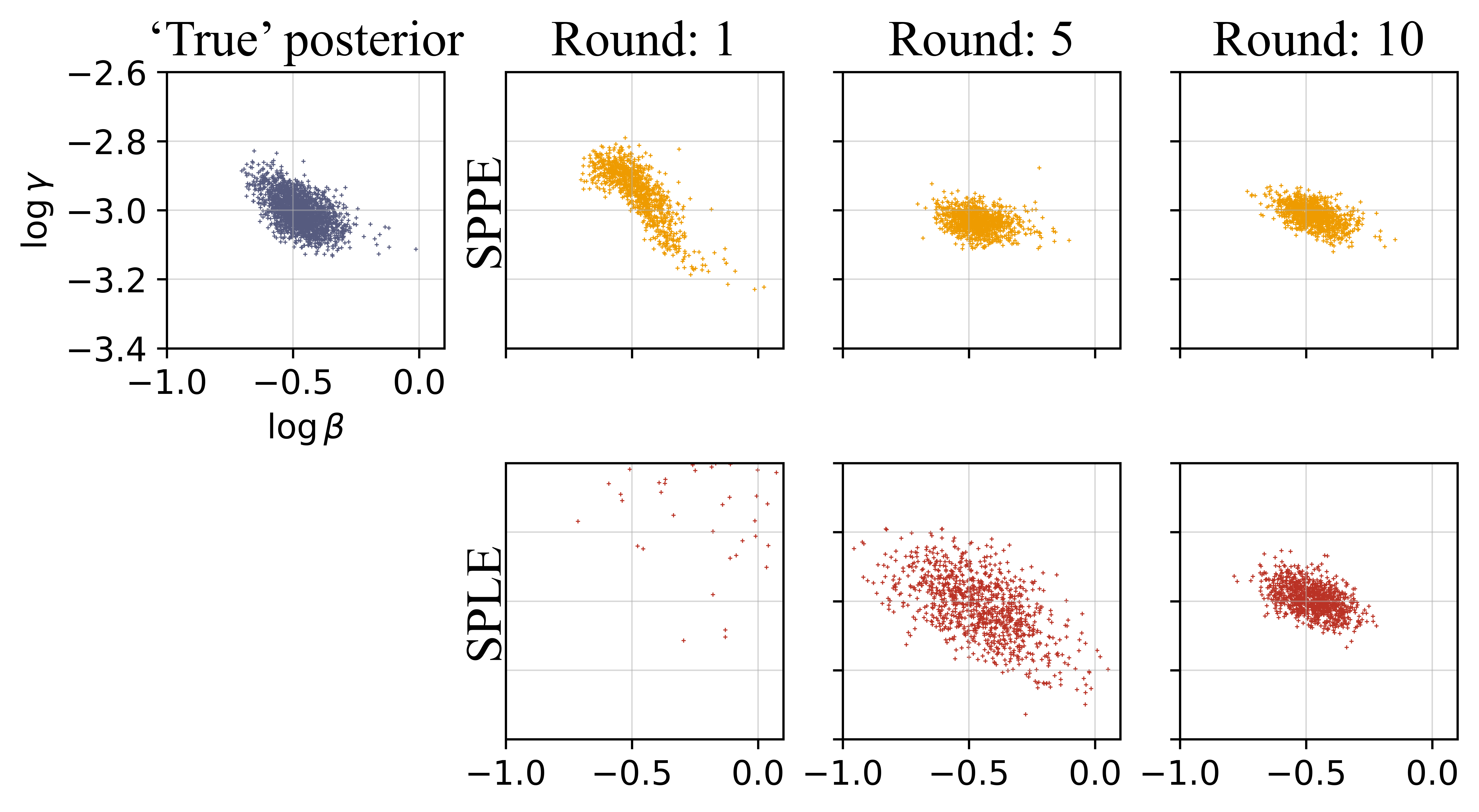}
\end{subfigure}
\hspace{-0.45\textwidth}
\begin{subfigure}[t]{0.02\textwidth}
    \vspace{3pt}
    \textbf{A}
\end{subfigure}
\hspace{0.415\textwidth}
\begin{subfigure}[t]{0.553\textwidth}
    \vspace{2pt}
    \includegraphics[width = 1\textwidth]{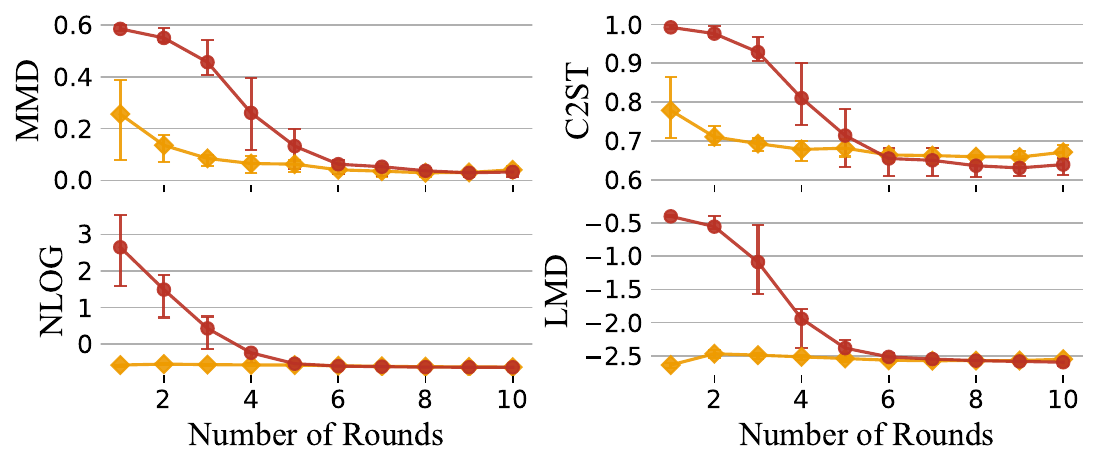}
\end{subfigure}
\hspace{-0.57\textwidth}
\begin{subfigure}[t]{0.02\textwidth}
    \vspace{3pt}
    \textbf{B}
\end{subfigure}
\caption{
Inference on SIR model.
\textbf{A.} Convergence of sequential posterior estimations given DP-protected infection trajectory. Each round entails $N = 1000$ simulations.
\textbf{B.} Approximation accuracy by SPPE (orange) and SPLE (red) against the number of rounds, the error bars represent the mean with the upper and lower quartiles over 20 random trials.
}
\vspace{-0.5em}
\label{fig:sir}
\end{figure}

\paragraph{Real disease outbreaks.}
\begin{figure}[ht]
\hspace{0.0\textwidth}
\begin{subfigure}[t]{0.02\textwidth}
    \vspace{3pt}
    \textbf{A}
\end{subfigure}
\begin{subfigure}[t]{0.715\textwidth}
    \vspace{0pt}
    \includegraphics[width = 1\textwidth]{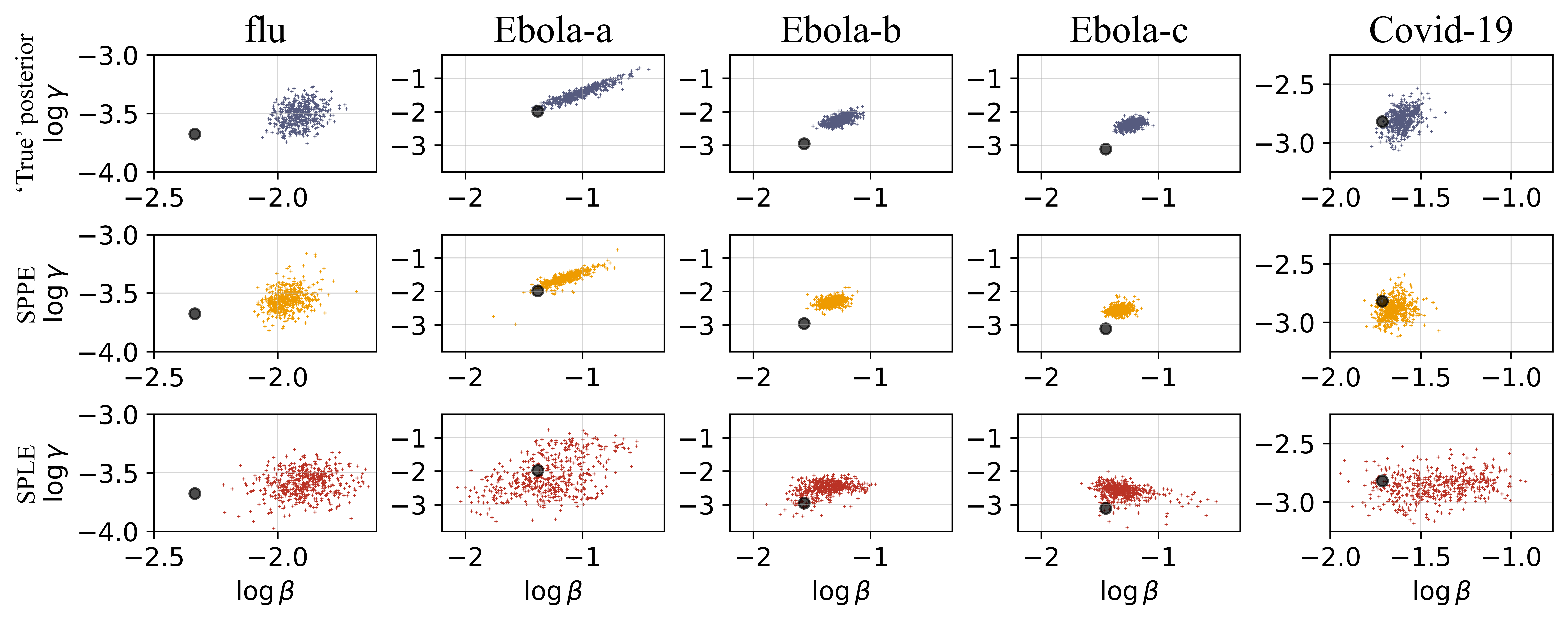}
\end{subfigure}
\begin{subfigure}[t]{0.24\textwidth}
    \vspace{0pt}
    \includegraphics[width = 1\textwidth]{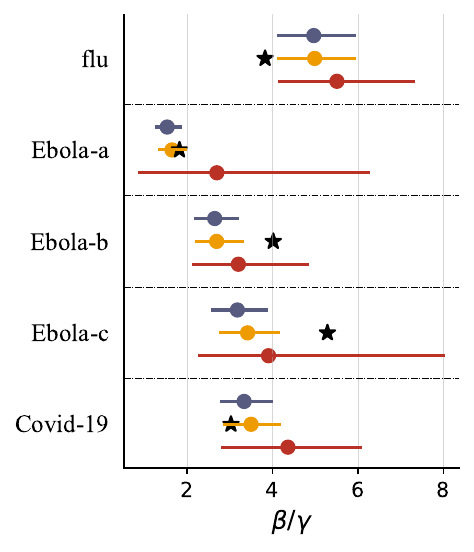}
\end{subfigure}
\hspace{-0.252\textwidth}
\begin{subfigure}[t]{0.02\textwidth}
    \vspace{3pt}
    \textbf{B}  
\end{subfigure}
\caption{
Inference on real infectious disease outbreaks.
\textbf{A.} Visualization of the posterior distribution given private infection curve applied to flu, Ebola [in a) Guinea, b) Liberia, and c) Sierra Leone], and COVID-19 in Clark County, Nevada. 
All experiments use a privacy level of $\epsilon=10$.
\textbf{B.} Mean and 95\% credible intervals for $R_0 = \beta/\gamma$ with different methods in each dataset. Grey: SMC-ABC; orange: SPPE; red: SPLE.
A non-DP baseline (black stars) is also shown, which is obtained from the solution of the corresponding ordinary differential equations. 
}
\vspace{-0.5em}
\label{fig:sir_realdata}
\end{figure}
We apply our privacy mechanism and inference methods to several real infectious disease outbreaks: influenza, Ebola, and COVID-19. 
In Figure~\ref{fig:sir_realdata}-A, we compare the posterior distributions $\pi(\beta,\gamma \mid \sdp)$ obtained from SMC-ABC, SPPE, and SPLE. 
The results are similar to synthetic data experiments: when SPPE and SPLE use the same computational resources, the SPPE posterior converges faster than SPLE.
In Figure~\ref{fig:sir_realdata}-B, we inspect the 95\%-credible interval for the ratio $R_0 = \beta/\gamma$, which is known as the basic reproduction number.
Our $R_0$ estimates using privatized data are consistent with common estimates of $R_0$ for these diseases~\citep{umichsph2020}.
Specifically, typical $R_0$ ranges are $(1.51, 2.53)$ for Ebola, $(1.5, 3.5)$ for COVID-19, and with the exception of the flu outbreak $(0.9, 2.1)$, which should be modeled by the SI model instead of SIR. 
\subsection{Bayesian linear regression} 
We demonstrate our methods on a linear regression model, 
and compare it to existing work on DP regression analysis like
\citep{ju2022data,bernstein2019differentially}.
We consider linear regression with $n$ subjects and $p$ predictors. 
Denote $x_0\in\mathbb{R}^{n\times p}$ as the design matrix without intercept terms, and let $x=(\textbf{1}_{n\times 1}, x_0)$ represent the design matrix with the intercept. 
Ordinary linear regression models assume that the outcomes $y$ satisfy $y|x_0\sim\mathcal{N}_n(x\beta, \sigma^2 I_n)$. 
Under the constraint of differential privacy, both outcomes $y$ and predictors $x_0$ are subject to calibrated noise. 
In a Bayesian setting, we model the predictors with $x_{0,i} \sim\mathcal{N}_p(m, \Sigma)$ for $i=1,2,\cdots,n$ independently. 
Our parameter of interest is $\beta$, which represents the $(p + 1)$-dimensional vector of regression coefficients. Our experiments assume that $\sigma$, $m$, and $\Sigma$ are fixed, to illustrate our algorithm.
In practice, one can also estimate these parameters from data.
Our setting is the same as that used in \citet{ju2022data}.

\paragraph{Private sufficient statistics.} We achieve $\epsilon$-DP on confidential data $(x,y)$ by adding Laplace noise to sufficient statistics.
Achieving privacy requires finite $\ell_1$ sensitivity on confidential data. As a result,  
before adding noise for privacy, we first need to clamp the predictors and the responses, and then normalize them to take values in $[-1,1]$.
Let's denote the clamped confidential data as $\tilde x$ and $\tilde y$, respectively. 
We then define the summary statistics of clamped data as $\tilde{s} := \left(\frac{1}{n}\tilde x^\top \tilde y, \frac{1}{n}\tilde y^\top \tilde y, \frac{1}{n}\tilde x^\top \tilde x\right)$.
We have refined the sensitivity analysis of \citet{ju2022data} to $\Delta(s) = \frac{1}{n}(p^2 + 3p + 3)$. 
The privacy summary statistics $\sdp$ is achieved by adding independent $\mathrm{Laplace}(0,\Delta_1(\tilde{s}) / \epsilon)$ noise to each entry of $\tilde{s}.$ 
This output perturbation mechanism satisfies $\epsilon$-DP.
Details of clamping and sensitivity analysis are in the supplementary materials.

\paragraph{Posterior approximations.}
We compare the 95\% posterior credible intervals obtained by methods applicable to linear regression, including SMC-ABC, SPPE, SPLE, Data-augmentation MCMC (DA-MCMC) \citep{ju2022data}, Gibbs-SS~\citep{bernstein2019differentially}, and RNPE \citep{ward2022robust}.
See Table~\ref{tab:linreg},~\ref{tab:linreg_eps1}, and~\ref{tab:linreg_eps.1} for marginal posterior credible intervals of $\beta \mid \sdp$ at privacy levels $\epsilon=10$, $1$ and $0.1$ respectively. Both tables are in 
Appendix~\ref{appendix:regression}.
We also use the Kolmogorov-Smirnov test to assess the similarity of empirical posterior marginal, shown in Figure~\ref{fig:ks_test} at a privacy level of $\epsilon = 10$. 
Results from SPPE, SPLE, SMC-ABC, and DA-MCMC reach agreement on the posterior marginals. However, results from Gibbs-SS and RNPE are qualitatively different from the other four methods, yielding biased approximations for $\beta_0$ and $\beta_1$ respectively. 
Among the likelihood-free methods, SPLE attempts to approximate the likelihood function and is the most similar to DA-MCMC, our likelihood-based baseline.

\begin{figure}[t]
    \centering
    \vspace{-0.5em}
    \includegraphics[width=0.80\textwidth]{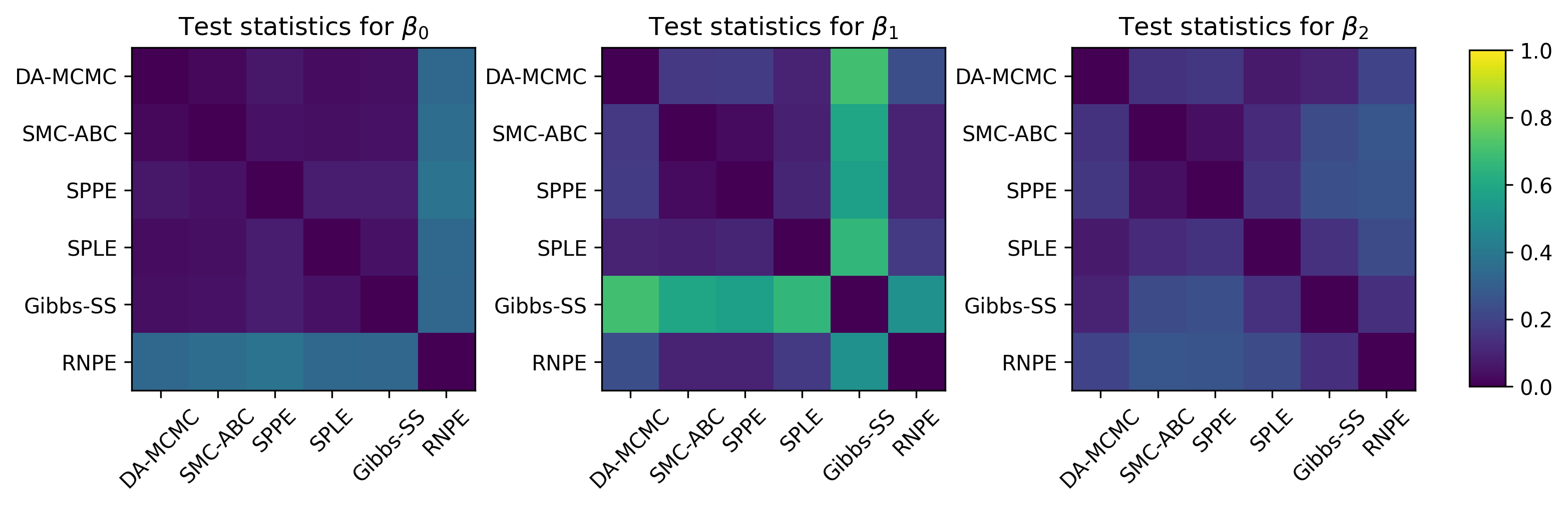}
    \vspace{-0.5em}
    \caption{Kolmogorov-Smirnov test statistics between approximations of posterior marginals, at $\epsilon=10$.}
    \vspace{-0.5em}
    \label{fig:ks_test}
\end{figure}

\paragraph{The cost of privacy.}
Although it has been a standard practice in many DP work~\citep{bernstein2018differentially, bernstein2019differentially, ju2022data, gong2022exact} to achieve finite global sensitivity through clamping, 
this benefit of privacy protection comes at the cost of accuracy of the subsequent statistical analysis.

Our experiments demonstrate the privacy-utility trade-off, which has also been demonstrated by related works on noise/privacy-aware inference.
A naive plug-in estimator (plugging in $\sdp$ as $s$ into the conjugate posterior $\pi(\theta \mid s)$) gives the wrong posterior; See the second rows in Table~\ref{tab:linreg}. 
Besides, achieving privacy protection comes at the cost of estimation accuracy: private data posterior $\pi(\theta \mid \sdp)$ is different from the confidential data posterior $\pi(\theta \mid x)$ even under a high loss budget of $\epsilon = 10$ (small privacy noise) setting; 
See the first rows in Table~\ref{tab:linreg},~\ref{tab:linreg_eps1}, and~\ref{tab:linreg_eps.1}.
With small privacy noise, data corruption primarily comes from the clamping step. 
Evaluating this censoring bias is still an open problem in DP data analysis, with some recent attempts in \citet{biswas2020coinpress,evans2019statistically,covington2021unbiased}.

\section{Discussion} \label{section:discuss}
In this work, we propose three simulation-based inference methods to learn population parameters from privacy-protected data: SMC-ABC, SPPE, and SPLE. The latter two are neural density estimation methods. 
We have designed SPPE and SPLE to leverage state-of-the-art computational tools, such as normalizing flows and randomized quasi-Monte Carlo, to be computationally efficient for complex data models. 
SPPE aims to approximate the posterior-data posterior, and SPLE approximates the posterior-data marginal likelihood. 

We compare our methods to similar DP data analysis work that focuses on \emph{post processing} of privacy-protected datasets (or their summary statistics). Compared with existing ABC-based analysis for DP data~\citep{gong2022exact}, 
SPPE and SPLE do not reject training samples and hence are more computationally efficient. 
Compared with DA-MCMC~\citep{ju2022data}, our method does not require that the confidential data likelihood can be evaluated easily. 
Our methods require only that one can simulate from the prior distribution $\pi(\theta)$ and the confidential model $f(x \mid \theta).$ 
They also all scale linearly with the sample size of the confidential database.

Our work contributes to the growing literature on statistical analysis with privatized data.
In particular, our simulation-based inference framework can be applied to complex models with intractable likelihood functions and the resulting triply-intractable private data posterior.  
We hope our methods can catalyze DP-protected data sharing between data collectors and analysts. 
Our experiments and analysis demonstrate the necessity and feasibility of designing valid statistical inference procedures to correct for biases introduced by privacy-protection mechanisms. 
We advocate for increases in both sharing privacy-protected data by collectors and using valid inference procedures. 

We acknowledge the limitations of the present work and point out future directions. 
Our methods leverage the fact that popular DP mechanisms can be efficiently achieved with random number generators. 
In some tasks, such as DP principle component analysis~\citep{chaudhuri2013near}, the privacy mechanism is actually intractable to simulate from but its density is easy to evaluate. 
Our method is not applicable to this type of DP mechanism. 
Additionally, since our method scales linearly in the sample size of the confidential database, it might not be ideal for massive datasets, such as the Facebook URL dataset~\citep{evans2023statistically}, which concerns millions of users. It is of interest to develop methods that scale sub-linearly in sample size. 
Finally, in \Cref{appendix:comp}, we show the algorithms scale linearly with the dimension of the summary statistics and quadratically with the number of nodes per layer in the neural network. 
In our experiments, we found that using at most 8 layers and 50 nodes can lead to satisfactory performance.
A precise understanding of how the convergence rate depends on sample size of the confidential data and the dimensionality of the summary statistics remains to be established and is likely to vary on a case-by-case basis for the choice neural density estimators.

\section*{Acknowledgments}
We sincerely appreciate the valuable comments and suggestions from anonymous referees.
The work of the third author was supported by the National Natural Science Foundation of China No.12171454 and Fundamental Research Funds for the Central Universities.

\bibliography{refs}

\begin{thebibliography}{}

\bibitem[Abowd et~al., 2022]{Abowd20222020}
Abowd, J., Ashmead, R., Cumings-Menon, R., Garfinkel, S., Heineck, M., Heiss,
  C., Johns, R., Kifer, D., Leclerc, P., Machanavajjhala, A., Moran, B.,
  Sexton, W., Spence, M., and Zhuravlev, P. (2022).
\newblock The 2020 {Census} {Disclosure} {Avoidance} {System} {TopDown}
  {Algorithm}.
\newblock {\em Harvard Data Science Review}, (Special Issue 2).
\newblock https://hdsr.mitpress.mit.edu/pub/7evz361i.

\bibitem[Aistleitner et~al., 2017]{aistleitner2017functions}
Aistleitner, C., Pausinger, F., Svane, A.~M., and Tichy, R.~F. (2017).
\newblock On functions of bounded variation.
\newblock In {\em Mathematical Proceedings of the Cambridge Philosophical
  Society}, volume 162, pages 405--418. Cambridge University Press.

\bibitem[Alabi and Vadhan, 2022]{alabi2022hypothesis}
Alabi, D. and Vadhan, S. (2022).
\newblock Hypothesis testing for differentially private linear regression.
\newblock {\em Advances in Neural Information Processing Systems},
  35:14196--14209.

\bibitem[Arora et~al., 2023]{arora2023faster}
Arora, R., Bassily, R., Gonz{\'a}lez, T., Guzm{\'a}n, C.~A., Menart, M., and
  Ullah, E. (2023).
\newblock {Faster rates of convergence to stationary points in differentially
  private optimization}.
\newblock In {\em International Conference on Machine Learning}, pages
  1060--1092. PMLR.

\bibitem[Awan and Wang, 2023]{awan2023simulationbased}
Awan, J. and Wang, Z. (2023).
\newblock {Simulation-based, finite-sample inference for privatized data}.

\bibitem[Barrientos et~al., 2019]{barrientos2019differentially}
Barrientos, A.~F., Reiter, J.~P., Machanavajjhala, A., and Chen, Y. (2019).
\newblock Differentially private significance tests for regression
  coefficients.
\newblock {\em Journal of Computational and Graphical Statistics},
  28(2):440--453.

\bibitem[Bassily et~al., 2021]{bassily2021differentially}
Bassily, R., Guzm{\'a}n, C.~A., and Menart, M. (2021).
\newblock Differentially private stochastic optimization: new results in convex
  and non-convex settings.
\newblock In Beygelzimer, A., Dauphin, Y., Liang, P., and Vaughan, J.~W.,
  editors, {\em Advances in Neural Information Processing Systems}.

\bibitem[Basu and Owen, 2016]{basu2016transformations}
Basu, K. and Owen, A.~B. (2016).
\newblock {Transformations and Hardy--Krause variation}.
\newblock {\em SIAM Journal on Numerical Analysis}, 54(3):1946--1966.

\bibitem[Beaumont et~al., 2009]{beaumont2009adaptive}
Beaumont, M.~A., Cornuet, J.-M., Marin, J.-M., and Robert, C.~P. (2009).
\newblock {Adaptive approximate Bayesian computation}.
\newblock {\em Biometrika}, 96(4):983--990.

\bibitem[Bernstein and Sheldon, 2018]{bernstein2018differentially}
Bernstein, G. and Sheldon, D.~R. (2018).
\newblock {Differentially private Bayesian inference for exponential families}.
\newblock {\em Advances in Neural Information Processing Systems}, 31.

\bibitem[Bernstein and Sheldon, 2019]{bernstein2019differentially}
Bernstein, G. and Sheldon, D.~R. (2019).
\newblock {Differentially private Bayesian linear regression}.
\newblock {\em Advances in Neural Information Processing Systems}, 32.

\bibitem[Bie et~al., 2023]{bie2023private}
Bie, A., Kamath, G., and Zhang, G. (2023).
\newblock {Private GANs, Revisited}.
\newblock {\em arXiv preprint arXiv:2302.02936}.

\bibitem[Biswas et~al., 2020]{biswas2020coinpress}
Biswas, S., Dong, Y., Kamath, G., and Ullman, J. (2020).
\newblock Coinpress: Practical private mean and covariance estimation.
\newblock In Larochelle, H., Ranzato, M., Hadsell, R., Balcan, M., and Lin, H.,
  editors, {\em Advances in Neural Information Processing Systems}, volume~33,
  pages 14475--14485. Curran Associates, Inc.

\bibitem[Cai et~al., 2021]{cai2021cost}
Cai, T.~T., Wang, Y., and Zhang, L. (2021).
\newblock {The cost of privacy: Optimal rates of convergence for parameter
  estimation with differential privacy}.
\newblock {\em The Annals of Statistics}, 49(5):2825--2850.

\bibitem[Chaudhuri et~al., 2013]{chaudhuri2013near}
Chaudhuri, K., Sarwate, A.~D., and Sinha, K. (2013).
\newblock {A near-optimal algorithm for differentially-private principal
  components.}
\newblock {\em Journal of Machine Learning Research}, 14.

\bibitem[Covington et~al., 2021]{covington2021unbiased}
Covington, C., He, X., Honaker, J., and Kamath, G. (2021).
\newblock Unbiased statistical estimation and valid confidence intervals under
  differential privacy.
\newblock {\em arXiv preprint arXiv:2110.14465}.

\bibitem[Cranmer et~al., 2020]{cranmer2020frontier}
Cranmer, K., Brehmer, J., and Louppe, G. (2020).
\newblock {The frontier of simulation-based inference}.
\newblock {\em Proceedings of the National Academy of Sciences},
  117(48):30055--30062.

\bibitem[Dinh et~al., 2017]{dinh2016density}
Dinh, L., Sohl-Dickstein, J., and Bengio, S. (2017).
\newblock Density estimation using real {NVP}.
\newblock In {\em International Conference on Learning Representations}.

\bibitem[Dong et~al., 2022]{dong2022gaussian}
Dong, J., Roth, A., and Su, W.~J. (2022).
\newblock {Gaussian differential privacy}.
\newblock {\em Journal of the Royal Statistical Society Series B: Statistical
  Methodology}, 84(1):3--37.

\bibitem[Drechsler, 2023]{drechsler2023differential}
Drechsler, J. (2023).
\newblock Differential privacy for government agencies---are we there yet?
\newblock {\em Journal of the American Statistical Association},
  118(541):761--773.

\bibitem[Drechsler et~al., 2024]{drechsler2024handbook}
Drechsler, J., Kifer, D., Reiter, J., and Slavkovi{\'c}, A. (2024).
\newblock {\em Handbook of Sharing Confidential Data: Differential Privacy,
  Secure Multiparty Computation, and Synthetic Data}.
\newblock CRC Press.

\bibitem[Durkan et~al., 2019]{durkan2019neural}
Durkan, C., Bekasov, A., Murray, I., and Papamakarios, G. (2019).
\newblock {Neural spline flows}.
\newblock {\em Advances in Neural Information Processing Systems}, 32.

\bibitem[Dwork et~al., 2006]{dwork2006calibrating}
Dwork, C., McSherry, F., Nissim, K., and Smith, A. (2006).
\newblock {Calibrating noise to sensitivity in private data analysis}.
\newblock In {\em Theory of Cryptography: Third Theory of Cryptography
  Conference, TCC 2006, New York, NY, USA, March 4-7, 2006. Proceedings 3},
  pages 265--284. Springer.

\bibitem[Eisenberg, 2020]{umichsph2020}
Eisenberg, J. (2020).
\newblock {R0}: How scientists quantify the intensity of an outbreak like
  coronavirus and its pandemic potential.
\newblock
  \url{https://sph.umich.edu/pursuit/2020posts/how-scientists-quantify-outbreaks.html}.
\newblock Accessed: 2023-10-08.

\bibitem[Evans and King, 2023]{evans2023statistically}
Evans, G. and King, G. (2023).
\newblock {Statistically valid inferences from differentially private data
  releases, with application to the Facebook URLs dataset}.
\newblock {\em Political Analysis}, 31(1):1–21.

\bibitem[Evans et~al., 2019]{evans2019statistically}
Evans, G., King, G., Schwenzfeier, M., and Thakurta, A. (2019).
\newblock {Statistically valid inferences from privacy-protected data}.
\newblock {\em American Political Science Review}, pages 1--16.

\bibitem[Foulds et~al., 2016]{foulds2016theory}
Foulds, J., Geumlek, J., Welling, M., and Chaudhuri, K. (2016).
\newblock {On the theory and practice of privacy-preserving Bayesian data
  analysis}.
\newblock {\em arXiv preprint arXiv:1603.07294}.

\bibitem[Gillespie, 1977]{gillespie1977exact}
Gillespie, D.~T. (1977).
\newblock {Exact stochastic simulation of coupled chemical reactions}.
\newblock {\em {The Journal of Physical Chemistry}}, 81(25):2340--2361.

\bibitem[Gong, 2022]{gong2022exact}
Gong, R. (2022).
\newblock Exact inference with approximate computation for differentially
  private data via perturbations.
\newblock {\em Journal of Privacy and Confidentiality}, 12(2).

\bibitem[Gong et~al., 2022]{gong2022harnessing}
Gong, R., Groshen, E.~L., and Vadhan, S. (2022).
\newblock Harnessing the known unknowns: Differential privacy and the 2020
  census (co-editors{\textquoteright} forward).
\newblock {\em Harvard Data Science Review}, (Special Issue 2).

\bibitem[Greenberg et~al., 2019]{apt}
Greenberg, D., Nonnenmacher, M., and Macke, J. (2019).
\newblock {Automatic posterior transformation for likelihood-free inference}.
\newblock In {\em International Conference on Machine Learning}, pages
  2404--2414. PMLR.

\bibitem[Gretton et~al., 2012]{gretton2012kernel}
Gretton, A., Borgwardt, K.~M., Rasch, M.~J., Sch{\"o}lkopf, B., and Smola, A.
  (2012).
\newblock A kernel two-sample test.
\newblock {\em The Journal of Machine Learning Research}, 13(1):723--773.

\bibitem[Guo and Hu, 2023]{guo2023data}
Guo, S. and Hu, J. (2023).
\newblock Data privacy protection and utility preservation through {B}ayesian
  data synthesis: A case study on airbnb listings.
\newblock {\em The American Statistician}, 77(2):192--200.

\bibitem[Ju et~al., 2022]{ju2022data}
Ju, N., Awan, J., Gong, R., and Rao, V. (2022).
\newblock {Data augmentation MCMC for Bayesian inference from privatized data}.
\newblock {\em Advances in Neural Information Processing Systems},
  35:12732--12743.

\bibitem[Karwa et~al., 2015]{karwa2015private}
Karwa, V., Kifer, D., and Slavkovi{\'c}, A.~B. (2015).
\newblock {Private posterior distributions from variational approximations}.
\newblock {\em arXiv preprint arXiv:1511.07896}.

\bibitem[Kingma and Ba, 2014]{kingma2014adam}
Kingma, D.~P. and Ba, J. (2014).
\newblock Adam: A method for stochastic optimization.
\newblock {\em arXiv preprint arXiv:1412.6980}.

\bibitem[Kobyzev et~al., 2020]{kobyzev2020normalizing}
Kobyzev, I., Prince, S.~J., and Brubaker, M.~A. (2020).
\newblock Normalizing flows: An introduction and review of current methods.
\newblock {\em IEEE Transactions on Pattern Analysis and Machine Intelligence},
  43(11):3964--3979.

\bibitem[Kulkarni et~al., 2021]{kulkarni2021differentially}
Kulkarni, T., J{\"a}lk{\"o}, J., Koskela, A., Kaski, S., and Honkela, A.
  (2021).
\newblock {Differentially private Bayesian inference for generalized linear
  models}.
\newblock In {\em International Conference on Machine Learning}, pages
  5838--5849. PMLR.

\bibitem[Lee et~al., 2022]{lee2022differentially}
Lee, J., Kim, M., Jeong, Y., and Ro, Y. (2022).
\newblock Differentially private normalizing flows for synthetic tabular data
  generation.
\newblock In {\em Proceedings of the AAAI Conference on Artificial
  Intelligence}, volume~36, pages 7345--7353.

\bibitem[Li et~al., 2023]{li2023estimation}
Li, S., Zhang, L., Cai, T.~T., and Li, H. (2023).
\newblock {Estimation and inference for high-dimensional generalized linear
  models with knowledge transfer}.
\newblock {\em Journal of the American Statistical Association}, pages 1--12.

\bibitem[Liu, 2018]{liu2018generalized}
Liu, F. (2018).
\newblock {Generalized Gaussian mechanism for differential privacy}.
\newblock {\em IEEE Transactions on Knowledge and Data Engineering},
  31(4):747--756.

\bibitem[Lopez-Paz and Oquab, 2016]{lopez2016revisiting}
Lopez-Paz, D. and Oquab, M. (2016).
\newblock Revisiting classifier two-sample tests.
\newblock {\em arXiv preprint arXiv:1610.06545}.

\bibitem[Lueckmann et~al., 2021]{lueckmann2021benchmarking}
Lueckmann, J.-M., Boelts, J., Greenberg, D., Goncalves, P., and Macke, J.
  (2021).
\newblock {Benchmarking simulation-based inference}.
\newblock In {\em International Conference on Artificial Intelligence and
  Statistics}, pages 343--351. PMLR.

\bibitem[Lueckmann et~al., 2017]{lueckmann2017flexible}
Lueckmann, J.-M., Goncalves, P.~J., Bassetto, G., {\"O}cal, K., Nonnenmacher,
  M., and Macke, J.~H. (2017).
\newblock {Flexible statistical inference for mechanistic models of neural
  dynamics}.
\newblock {\em Advances in Neural Information Processing Systems}, 30.

\bibitem[L’Ecuyer, 2018]{l2018randomized}
L’Ecuyer, P. (2018).
\newblock {\em {Randomized quasi-Monte Carlo: An introduction for
  practitioners}}.
\newblock Springer.

\bibitem[Miller et~al., 2022]{miller2022contrastive}
Miller, B.~K., Weniger, C., and Forr{\'e}, P. (2022).
\newblock Contrastive neural ratio estimation.
\newblock {\em arXiv preprint arXiv:2210.06170}.

\bibitem[Nissim et~al., 2007]{nissim2007smooth}
Nissim, K., Raskhodnikova, S., and Smith, A. (2007).
\newblock Smooth sensitivity and sampling in private data analysis.
\newblock In {\em Proceedings of the Thirty-Ninth Annual ACM Symposium on
  Theory of Computing}, STOC '07, page 75–84, New York, NY, USA. Association
  for Computing Machinery.

\bibitem[Owen, 1997a]{owen1997monte}
Owen, A.~B. (1997a).
\newblock {Monte Carlo variance of scrambled net quadrature}.
\newblock {\em SIAM Journal on Numerical Analysis}, 34(5):1884--1910.

\bibitem[Owen, 1997b]{owen1997scrambled}
Owen, A.~B. (1997b).
\newblock {Scrambled net variance for integrals of smooth functions}.
\newblock {\em The Annals of Statistics}, 25(4):1541--1562.

\bibitem[Papamakarios and Murray, 2016]{papamakarios2016fast}
Papamakarios, G. and Murray, I. (2016).
\newblock {Fast $\varepsilon$-free inference of simulation models with Bayesian
  conditional density estimation}.
\newblock {\em Advances in Neural Information Processing Systems}, 29.

\bibitem[Papamakarios et~al., 2021]{papamakario2021normalizing}
Papamakarios, G., Nalisnick, E., Rezende, D.~J., Mohamed, S., and
  Lakshminarayanan, B. (2021).
\newblock Normalizing flows for probabilistic modeling and inference.
\newblock {\em J. Mach. Learn. Res.}, 22(1).

\bibitem[Papamakarios et~al., 2017]{papamakarios2017masked}
Papamakarios, G., Pavlakou, T., and Murray, I. (2017).
\newblock {Masked autoregressive flow for density estimation}.
\newblock {\em Advances in Neural Information Processing Systems}, 30.

\bibitem[Papamakarios et~al., 2019]{papamakarios2019sequential}
Papamakarios, G., Sterratt, D., and Murray, I. (2019).
\newblock {Sequential neural likelihood: Fast likelihood-free inference with
  autoregressive flows}.
\newblock In {\em The 22nd International Conference on Artificial Intelligence
  and Statistics}, pages 837--848. PMLR.

\bibitem[Rho et~al., 2023]{rho2023differentially}
Rho, S., Cummings, R., and Misra, V. (2023).
\newblock {Differentially private synthetic control}.
\newblock In {\em International Conference on Artificial Intelligence and
  Statistics}, pages 1457--1491. PMLR.

\bibitem[Sisson et~al., 2007]{sisson2007sequential}
Sisson, S.~A., Fan, Y., and Tanaka, M.~M. (2007).
\newblock Sequential {Monte} {Carlo} without likelihoods.
\newblock {\em Proceedings of the National Academy of Sciences},
  104(6):1760--1765.

\bibitem[Su et~al., 2023]{su2023differentially}
Su, B., Wang, Y., Schiavazzi, D.~E., and Liu, F. (2023).
\newblock {Differentially private normalizing flows for density estimation,
  data synthesis, and variational inference with application to electronic
  health records}.
\newblock {\em arXiv preprint arXiv:2302.05787}.

\bibitem[Waites and Cummings, 2021]{waites2021differentially}
Waites, C. and Cummings, R. (2021).
\newblock {Differentially private normalizing flows for privacy-preserving
  density estimation}.
\newblock In {\em Proceedings of the 2021 AAAI/ACM Conference on AI, Ethics,
  and Society}, AIES '21, page 1000–1009, New York, NY, USA. Association for
  Computing Machinery.

\bibitem[Ward et~al., 2022]{ward2022robust}
Ward, D., Cannon, P., Beaumont, M., Fasiolo, M., and Schmon, S. (2022).
\newblock Robust neural posterior estimation and statistical model criticism.
\newblock {\em Advances in Neural Information Processing Systems},
  35:33845--33859.

\bibitem[Williams and McSherry, 2010]{williams2010probabilistic}
Williams, O. and McSherry, F. (2010).
\newblock Probabilistic inference and differential privacy.
\newblock {\em Advances in Neural Information Processing Systems}, 23.

\end{thebibliography}

\appendix
\section{Derivations for SPLE and SPPE objectives}\label{appendix:derivations}
\setcounter{thm}{4}
\begin{lemma}
For private data likelihood estimation, minimizing the average KL divergence \\
$\kl{f(\sdp \mid \theta)}{q_\phi(\sdp \mid \theta)}$ under the prior $\pi(\theta)$, is equivalent to minimizing the objective function
\begin{equation}
    \ell_{\mathrm{PLE}}(\phi)=\mathbb{E}_{p(\theta, x)}  \left[- \int_{\mathbb{S}}\eta(\sdp \mid x) \log q_\phi(\sdp \mid \theta) \mrd \sdp \right].
\end{equation}
With respect to the joint distribution $\tilde{p}(\theta, x) \propto \tilde p(\theta) f(x \mid \theta)$, the objective function still has the form
\begin{equation}
    \ell_{\mathrm{PLE}}(\phi)=\mathbb{E}_{\tilde{p}(\theta, x)}  \left[-\int_{\mathbb{S}}\eta(\sdp\mid x) \log q_\phi(\sdp \mid \theta) \mrd \sdp \right].
\end{equation}
\end{lemma}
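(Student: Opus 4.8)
The plan is to reduce the averaged KL divergence to its $\phi$-dependent cross-entropy part, substitute the marginal-likelihood identity \eqref{eqn:marginal-likelihood}, and then interchange the order of integration. First I would expand the per-$\theta$ divergence by the entropy/cross-entropy split,
\[
\kl{f(\sdp \mid \theta)}{q_\phi(\sdp \mid \theta)} = \int_{\mathbb{S}} f(\sdp \mid \theta) \log f(\sdp \mid \theta)\, \mrd\sdp \; - \int_{\mathbb{S}} f(\sdp \mid \theta) \log q_\phi(\sdp \mid \theta)\, \mrd\sdp .
\]
The first term is the negative differential entropy of $f(\cdot \mid \theta)$ and carries no dependence on $\phi$, so after averaging against $\pi(\theta)$ it is an additive constant. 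Therefore minimizing $\mathbb{E}_{\pi(\theta)}[\kl{f(\sdp \mid \theta)}{q_\phi(\sdp \mid \theta)}]$ over $\phi$ is equivalent to minimizing the averaged cross-entropy $\mathbb{E}_{\pi(\theta)}\left[-\int_{\mathbb{S}} f(\sdp \mid \theta) \log q_\phi(\sdp \mid \theta)\, \mrd\sdp\right]$.

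Next I would insert $f(\sdp \mid \theta) = \int_{\mathbb{X}^n} f(x \mid \theta)\,\eta(\sdp \mid x)\, \mrd x$ from \eqref{eqn:marginal-likelihood} into this cross-entropy term and apply Fubini's theorem to swap the $\sdp$- and $x$-integrals. Recognizing $\pi(\theta) f(x \mid \theta) = p(\theta, x)$ then collapses the double integral into $\ell_{\mathrm{PLE}}(\phi) = \mathbb{E}_{p(\theta,x)}\left[-\int_{\mathbb{S}} \eta(\sdp \mid x) \log q_\phi(\sdp \mid \theta)\, \mrd\sdp\right]$, which is the first claimed identity. Note that $\log q_\phi$ depends on $\sdp$ only through the query, while the $x$-integration is absorbed entirely into the expectation over $p(\theta,x)$.

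For the second identity the key observation is that the KL target $f(\sdp \mid \theta)$ is a conditional density that does not depend on whatever marginal is assigned to $\theta$. Consequently the derivation above transfers verbatim with $\pi$ replaced by any proposal $\tilde p$: averaging the same divergence against $\tilde p(\theta)$, discarding the $\phi$-free entropy term, substituting the marginal likelihood, and applying Fubini yields the objective as an expectation under $\tilde p(\theta, x) = \tilde p(\theta) f(x \mid \theta)$. This prior-invariance is precisely what separates PLE from the posterior-estimation objective of \Cref{sec:pdp}, where the target $\pi(\theta \mid \sdp)$ genuinely depends on the prior and hence forces the APT reweighting \eqref{eqn:apt-weights}; for PLE no such correction enters.

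I do not expect a substantive obstacle here, since the argument is a routine chain of algebraic substitutions. The only point requiring care is the Fubini interchange, valid whenever $\int \pi(\theta)\int_{\mathbb{X}^n} f(x\mid\theta)\int_{\mathbb{S}} \eta(\sdp\mid x)\,|\log q_\phi(\sdp \mid \theta)|\,\mrd\sdp\,\mrd x\,\mrd\theta < \infty$, a mild integrability condition that holds when $q_\phi$ is bounded away from $0$ and $\infty$ on the relevant support, as is the case for the normalizing-flow families with a Gaussian base used in our experiments. The same condition with $\tilde p$ in place of $\pi$ licenses the interchange in the second identity.
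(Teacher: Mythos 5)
Your proposal is correct and follows essentially the same route as the paper's own proof: split the KL divergence into a $\phi$-free entropy term plus a cross-entropy, substitute the marginal-likelihood identity \eqref{eqn:marginal-likelihood}, interchange the order of integration, and observe that replacing $\pi(\theta)$ by $\tilde p(\theta)$ changes nothing because the target $f(\sdp\mid\theta)$ is a conditional density. Your explicit integrability condition for the Fubini step is a minor addition the paper leaves implicit, but it does not constitute a different argument.
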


\begin{proof}
Note that
\begin{align*}
& \quad \ \mathbb{E}_{\pi(\theta)}\left[ \kl{f(\sdp \mid \theta)}{q_\phi(\sdp \mid \theta)} \right] \\
&=\int_\Theta \pi(\theta)\left[\int_\mathbb{S} f(\sdp\mid\theta)\left(\log f(\sdp\mid\theta) - \log q_\phi(\sdp \mid \theta) \right)\mrd \sdp\right]\mrd \theta \\
&=C_1-\iint_{\mathbb{S}\times\Theta} \pi(\theta) f(\sdp\mid\theta) \log q_\phi(\sdp \mid \theta) \mrd \sdp \mrd \theta \\
&=C_1-\iiint_{\mathbb{S}\times\Theta\times\mathbb{X}^n} \pi(\theta) \eta(\sdp\mid x) f(x\mid\theta) \log q_\phi(\sdp \mid \theta) \mrd \sdp \mrd \theta \mrd x \\
&=C_1-\iint_{\Theta\times\mathbb{X}^n} p(\theta, x) \left[\int_{\mathbb{S}}\eta(\sdp\mid x) \log q_\phi(\sdp \mid \theta) \mrd \sdp \right]\mrd \theta \mrd x\\
&= C_1-\mathbb{E}_{p(\theta, x)}  \left[\int_{\mathbb{S}}\eta(\sdp\mid x) \log q_\phi(\sdp \mid \theta) \mrd \sdp \right],
\end{align*}
where $C_1=\iint_{\mathbb{S}\times\Theta} \pi(\theta) f(\sdp\mid\theta)\log f(\sdp\mid\theta)\mrd \sdp\mrd\theta$ is a constant unrelated to $\phi$. Furthermore, if $\theta$ are sampled from some proposal $\tilde p(\theta)$, we still have
\begin{align*}
& \quad \ \mathbb{E}_{\tilde{p}(\theta)}\left[ \kl{f(\sdp \mid \theta)}{q_\phi(\sdp \mid \theta)} \right] \notag\\
&= \iint_{\mathbb{S}\times\Theta} \tilde{p}(\theta) f(\sdp\mid\theta)\log f(\sdp\mid\theta)\mrd \sdp\mrd\theta -\mathbb{E}_{\tilde{p}(\theta, x)}  \left[\int_{\mathbb{S}}\eta(\sdp\mid x) \log q_\phi(\sdp \mid \theta) \mrd \sdp \right].
\end{align*}
\end{proof}

\begin{lemma}
For private data posterior estimation, minimizing the average KL divergence \\
$\kl{p(\theta \mid \sdp)}{q_\phi(\theta \mid \sdp)}$ with respect to the marginal evidence $p(\sdp)=\int_\mathbb{S} \pi(\theta) f(\sdp\mid\theta)\mrd \theta$, is equivalent to minimizing the objective function
\begin{equation}
    \ell_{\mathrm{PPE}}(\phi)=\mathbb{E}_{p(\theta, x)}  \left[- \int_{\mathbb{S}}\eta(\sdp \mid x) \log q_\phi(\theta \mid \sdp) \mrd \sdp \right].
\end{equation}
With respect to the joint distribution $\tilde{p}(\theta, x) \propto \tilde p(\theta) f(x \mid \theta)$, then the objective function has the form
\begin{equation}
    \ell_{\mathrm{PPE-A}}(\phi)=\mathbb{E}_{\tilde p(\theta, x)}\left[-\int_{\mathbb{S}} \eta(\sdp\mid x)\log \tilde q_\phi(\theta\mid\sdp) \mrd \sdp\right],
\end{equation}
where
\begin{equation}\label{eq:tilde_q}
    \tilde q_{\phi}(\theta \mid \sdp) := q_{\phi}(\theta \mid \sdp)\frac{\tilde p(\theta)}{\pi(\theta)} \frac{1}{Z(\sdp,\phi)},\quad Z(\sdp,\phi)=\int_\Theta q_{\phi}(\theta \mid \sdp)\frac{\tilde p(\theta)}{\pi(\theta)}\mrd \theta.
\end{equation}
\end{lemma}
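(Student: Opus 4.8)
The plan is to prove both identities by reusing the entropy-constant and marginal-likelihood machinery from the preceding PLE lemma, and then layering the automatic posterior transformation reweighting on top of it. For the first identity, I would start from $\mathbb{E}_{p(\sdp)}\left[\kl{p(\theta\mid\sdp)}{q_\phi(\theta\mid\sdp)}\right]$ and split the integrand into the entropy term $p(\theta\mid\sdp)\log p(\theta\mid\sdp)$, which integrates against $p(\sdp)$ to a constant $C_2$ independent of $\phi$, and the cross term $-\log q_\phi(\theta\mid\sdp)$. Collecting $p(\sdp)\,p(\theta\mid\sdp)=\pi(\theta)f(\sdp\mid\theta)$ and substituting the marginal likelihood $f(\sdp\mid\theta)=\int_{\mathbb{X}^n}f(x\mid\theta)\eta(\sdp\mid x)\mrd x$ from \eqref{eqn:marginal-likelihood} introduces the auxiliary integration over $\mathbb{X}^n$. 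Reordering the integrals (Fubini, under the usual integrability) factors the result as $\mathbb{E}_{p(\theta,x)}$ of the inner integral against $\eta$, which is exactly $\ell_{\mathrm{PPE}}(\phi)$, giving $\mathbb{E}_{p(\sdp)}[\kl{p(\theta\mid\sdp)}{q_\phi(\theta\mid\sdp)}]=C_2+\ell_{\mathrm{PPE}}(\phi)$. This half is routine and parallels the PLE derivation.

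For the APT form, I would introduce the proposal posterior $\tilde p(\theta\mid\sdp)=\tilde p(\theta)f(\sdp\mid\theta)/\tilde p(\sdp)$ with $\tilde p(\sdp)=\int_\Theta\tilde p(\theta)f(\sdp\mid\theta)\mrd\theta$, and establish the key algebraic identity $\tilde p(\theta\mid\sdp)=\frac{\tilde p(\theta)}{\pi(\theta)}\,p(\theta\mid\sdp)\,\frac{p(\sdp)}{\tilde p(\sdp)}$, obtained by inserting $\pi(\theta)/\pi(\theta)$ and regrouping. Running the same entropy-constant and marginal-likelihood steps as above, but now with the proposal joint $\tilde p(\theta,\sdp)=\tilde p(\theta)f(\sdp\mid\theta)$ and with the normalized reweighted density $\tilde q_\phi$ from \eqref{eq:tilde_q} in place of $q_\phi$, I would obtain $\mathbb{E}_{\tilde p(\sdp)}[\kl{\tilde p(\theta\mid\sdp)}{\tilde q_\phi(\theta\mid\sdp)}]=\tilde C+\ell_{\mathrm{PPE-A}}(\phi)$ for a $\phi$-independent constant $\tilde C$.

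The step I expect to be the main obstacle is verifying that this reweighting is the correct one, that is, that minimizing $\ell_{\mathrm{PPE-A}}$ drives the underlying $q_\phi$ toward the true posterior $p(\theta\mid\sdp)$ rather than toward the proposal posterior $\tilde p(\theta\mid\sdp)$. The normalization constant $Z(\sdp,\phi)$ is what makes this work, and its bookkeeping is delicate. At the KL minimizer $\tilde q_\phi(\theta\mid\sdp)=\tilde p(\theta\mid\sdp)$, I would substitute the definition $\tilde q_\phi=q_\phi\frac{\tilde p(\theta)}{\pi(\theta)}\frac{1}{Z(\sdp,\phi)}$ together with the algebraic identity above, cancel the common factor $\tilde p(\theta)/\pi(\theta)$, and note that the surviving ratios $Z(\sdp,\phi)$, $p(\sdp)$, $\tilde p(\sdp)$ depend on $\sdp$ only. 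Since $q_\phi(\theta\mid\sdp)$ and $p(\theta\mid\sdp)$ are both normalized in $\theta$, these $\sdp$-only factors must cancel, forcing $q_\phi(\theta\mid\sdp)=p(\theta\mid\sdp)$; this shows $Z$ absorbs exactly the $\sdp$-dependent evidence ratio and confirms the objective is consistent for the true posterior. I would keep the mild regularity assumptions (positivity of $\pi$ on the support of $\tilde p$, integrability for Fubini, finiteness of the entropy constants) implicit, exactly as in the PLE lemma.
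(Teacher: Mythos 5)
Your proposal is correct and follows essentially the same route as the paper's proof: expand the KL divergence into an entropy constant plus a cross-entropy term, use $p(\sdp)\,\pi(\theta\mid\sdp)=\pi(\theta)f(\sdp\mid\theta)$, insert the marginal-likelihood integral over $\mathbb{X}^n$ from \eqref{eqn:marginal-likelihood}, apply Fubini to get $\ell_{\mathrm{PPE}}$, and then repeat the argument with the proposal posterior and $\tilde q_\phi$ for the APT form. The one place you go beyond the paper is the final consistency check — the paper simply cites Prop.~1 of \citet{papamakarios2016fast} to conclude that $q_\phi$ converges to $\pi(\theta\mid\sdp)$, whereas you verify it directly by cancelling $\tilde p(\theta)/\pi(\theta)$ and using normalization in $\theta$ to eliminate the $\sdp$-only factors; moreover, your choice of $\tilde p(\sdp)$ (rather than the paper's $p(\sdp)$) as the outer measure in the second KL is the bookkeeping that actually makes that identity exact.
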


\begin{proof}
We have
\begin{align*}
& \quad \ \mathbb{E}_{p(\sdp)}\left[ \kl{\pi(\theta \mid \sdp)}{q_\phi(\theta \mid \sdp)} \right] \\
&=\int_\mathbb{S} p(\sdp)\left[\int_\Theta \pi(\theta \mid \sdp)\left(\log \pi(\theta \mid \sdp) - \log q_\phi(\theta \mid \sdp) \right)\mrd \theta\right]\mrd \sdp \\
&=C_2-\iint_{\mathbb{S}\times\Theta} p(\sdp) \pi(\theta \mid \sdp) \log q_\phi(\theta \mid \sdp) \mrd \sdp \mrd \theta \\
&=C_2-\iiint_{\mathbb{S}\times\Theta\times\mathbb{X}^n} \pi(\theta) \eta(\sdp\mid x) f(x\mid\theta) \log q_\phi(\theta \mid \sdp) \mrd \sdp \mrd \theta \mrd x \\
&=C_2-\iint_{\Theta\times\mathbb{X}^n} p(\theta, x) \left[\int_{\mathbb{S}}\eta(\sdp\mid x) \log q_\phi(\theta \mid \sdp) \mrd \sdp \right]\mrd \theta \mrd x\\
&= C_2-\mathbb{E}_{p(\theta, x)}  \left[\int_{\mathbb{S}}\eta(\sdp\mid x) \log q_\phi(\theta \mid \sdp) \mrd \sdp \right],
\end{align*}
where $C_2=\iint_{\mathbb{S}\times\Theta} p(\sdp) \pi(\theta\mid\sdp)\log \pi(\theta\mid\sdp)\mrd \sdp\mrd\theta$ is a constant unrelated to $\phi$. Furthermore, if $\theta$ are sampled from some proposal $\tilde p(\theta)$, then
\begin{align}
& \quad \ \mathbb{E}_{p(\sdp)}\left[ \kl{\tilde{\pi}(\theta \mid \sdp)}{\tilde{q}_\phi(\theta \mid \sdp)} \right] \notag\\
&= \iint_{\mathbb{S}\times\Theta} p(\sdp) \tilde{\pi}(\theta\mid\sdp)\log \tilde{\pi}(\theta\mid\sdp)\mrd \sdp\mrd\theta-\mathbb{E}_{\tilde{p}(\theta, x)}  \left[\int_{\mathbb{S}}\eta(\sdp\mid x) \log \tilde{q}_\phi(\theta \mid \sdp) \mrd \sdp \right], \label{eq:supp_apt}
\end{align} 
where $\tilde{\pi}(\theta \mid \sdp)$ is called \textit{proposal posterior} \citep{apt}, which satisfied
\begin{equation*}
\tilde{\pi}(\theta \mid \sdp) = \pi(\theta \mid \sdp) \frac{\tilde p(\theta) p(\sdp)}{\pi(\theta)\tilde p(\sdp)}, \quad \tilde p(\sdp) = \int_\Theta \tilde p(\theta) f(\sdp\mid\theta)\mrd \theta.
\end{equation*}
Based on Prop. 1 in the work of \citet{papamakarios2016fast}, minimizing \eqref{eq:supp_apt} results in the convergence of $\tilde{q}_\phi(\theta \mid \sdp)$ to $\tilde{\pi}(\theta \mid \sdp)$ and $q_\phi(\theta \mid \sdp)$ to $\pi(\theta \mid \sdp)$.
\end{proof}

\section{Proof of Proposition~\ref{prop:dp_sir}}
\setcounter{thm}{3}
\begin{prop}
Consider a sequence of $L$ points $\{t_1,\cdots,t_L\}$ in the time interval $[0, T]$, our privatized query can be
$\sdp = (s_1,\cdots,s_L)$ where each $s_i \sim \mathrm{Binomial}\left(n, \frac{I(t_i) + m}{K + 2m}\right)$ independently.
The mechanism generating $\sdp = (s_1,\cdots,s_L)$ satisfies $\epsilon$-DP, with $\epsilon = \frac{n}{m}L.$
\end{prop}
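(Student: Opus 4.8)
The plan is to reduce the $L$-dimensional $\epsilon$-DP claim to a single-coordinate privacy-loss bound, exploiting the independence of $s_1,\dots,s_L$ together with the fact that an infection count has sensitivity one. Because each $s_i$ is discrete (Binomial-valued), $\epsilon$-DP is equivalent to bounding the pointwise mass-function ratio $\eta(\sdp \mid x)/\eta(\sdp \mid x')$ for neighboring confidential databases $x,x'$ and every output $\sdp$, so I would work directly with that ratio.

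First I would record the sensitivity observation. The confidential data consists of individual infection trajectories, so at each fixed time $t_i$ the count $I(t_i)=\sum_j \mathbf{1}\{\text{person } j \text{ infected at } t_i\}$ changes the contribution of only one individual when passing to a neighboring database. Hence $|I(t_i)-I'(t_i)|\le 1$ for every $i$, where $I'$ denotes the counts under $x'$. Next, by independence the joint mass function factorizes, so
\begin{equation*}
\frac{\eta(\sdp \mid x)}{\eta(\sdp \mid x')} = \prod_{i=1}^L \left(\frac{p_i}{p_i'}\right)^{s_i}\left(\frac{1-p_i}{1-p_i'}\right)^{n-s_i}, \qquad p_i = \frac{I(t_i)+m}{K+2m},\quad p_i' = \frac{I'(t_i)+m}{K+2m},
\end{equation*}
and it suffices to show each factor is at most $\exp(n/m)$.

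For a single factor I would note that its logarithm is affine, hence monotone, in $s_i$, so the maximum over $s_i\in\{0,\dots,n\}$ is attained at $s_i=0$ or $s_i=n$, giving the two candidates $((1-p_i)/(1-p_i'))^n$ and $(p_i/p_i')^n$. The key simplification is that the common denominator $K+2m$ cancels: $p_i/p_i' = (I(t_i)+m)/(I'(t_i)+m)$ and $(1-p_i)/(1-p_i') = (K+m-I(t_i))/(K+m-I'(t_i))$. When $|I(t_i)-I'(t_i)|\le 1$, each of these has the form $(a+1)/a$ with $a\ge m$ (namely $a=\min(I(t_i),I'(t_i))+m\ge m$ or $a=K+m-\max(I(t_i),I'(t_i))\ge m$), and therefore is bounded by $1+1/m$. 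Consequently every single-coordinate ratio is at most $(1+1/m)^n$.

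Finally, using $\log(1+1/m)\le 1/m$, each factor is at most $\exp\!\big(n\log(1+1/m)\big)\le \exp(n/m)$, and multiplying the $L$ factors gives $\eta(\sdp \mid x)/\eta(\sdp \mid x')\le \exp(Ln/m)$, which is $\epsilon$-DP with $\epsilon=\frac{n}{m}L$. The main obstacle is the single-coordinate analysis: confirming that the worst case over outputs sits at the endpoints $s_i\in\{0,n\}$ and that, after the cancellation of $K+2m$, the worst neighbor pair produces exactly the factor $1+1/m$. The factorization across the $L$ time points (composition) and the logarithmic relaxation $\log(1+x)\le x$ are then routine.
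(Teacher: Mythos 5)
Your proposal is correct and follows essentially the same route as the paper's proof: bound the single-coordinate probability-mass ratio using $|I(t_i)-I'(t_i)|\le 1$, reduce it to a factor of the form $\left(\frac{m+1}{m}\right)^n\le \exp(n/m)$, and compose over the $L$ independent releases. The only cosmetic difference is that you locate the worst case over $s_i$ at the endpoints via monotonicity of the log-ratio, whereas the paper argues by cases on the sign of $I(t_i)-\tilde I(t_i)$ and drops the factor that is at most one; both yield the identical bound.
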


\begin{proof}
Denote the numbers of infectious as $I(t)\in\{0, 1, \cdots, K\}$ and its neighbors $\tilde I(t)$, note that $|I(t)-\tilde I(t)|\le1$ holds for all $t\in[0, T]$ because a change of the infection status of any one of the $K$ individuals will at most increase or decrease $I(t)$ by only 1. We examine the following density ratio:
\begin{align*}
\frac{p(s_i \mid I(t_i))}{p(s_i\mid\tilde I(t_i))}&=\frac{\binom{n}{s_i}\left(\frac{I(t_i)+m}{K+2m}\right)^{s_i}\left(\frac{K-I(t_i)+m}{K+2m}\right)^{(n-s_i)}}{\binom{n}{s_i}\left(\frac{\tilde I(t_i)+m}{K+2m}\right)^{s_i}\left(\frac{K-\tilde I(t_i)+m}{K+2m}\right)^{(n-s_i)}} \\ &=\left(\frac{I(t_i)+m}{\tilde I(t_i)+m}\right)^{s_i}\left(\frac{K- I(t_i)+m}{K-\tilde I(t_i)+m}\right)^{(n-s_i)} \\&:= H_i\ .
\end{align*}

This expression can be analyzed under three distinct cases. In the first case, when $\tilde I(t_i)> I(t_i)$, we have $\tilde I(t_i)=I(t_i)+1$, leading to $H_i\le\left(\frac{K- I(t_i)+m}{K-\tilde I(t_i)+m}\right)^{(n-s_i)}\le \left(\frac{K- I(t_i)+m}{K-\tilde I(t_i)+m}\right)^{n}\le\left(\frac{1+m}{m}\right)^{n}$. In the second case, if $\tilde I(t_i) < I(t_i)$, then $\tilde I(t_i)=I(t_i)-1$, and we obtain $H_i\le\left(\frac{I(t_i)+m}{\tilde I(t_i)+m}\right)^{s_i}\le \left(\frac{I(t_i)+m}{\tilde I(t_i)+m}\right)^{n}\le\left(\frac{1+m}{m}\right)^{n}$. Lastly, when $\tilde I(t_i)=I(t_i)$, $H_i$ equals to 1. Thus, $\frac{p(s_i \mid I(t_i))}{p(s_i \mid \tilde I(t_i))}\le \left(\frac{1+m}{m}\right)^{n} \le \exp\left(\frac{n}{m}\right)$. Now for $s_\mathrm{dp}=(s_1,\cdots,s_L)$, we have
\begin{equation*}
\frac{p(\sdp \mid \{ I(t) | t\in [0, T] \})}{p(\sdp \mid \{ \tilde{I}(t) | t\in [0, T] \})} = \frac{p(\sdp \mid I(t_1),\cdots,I(t_L))}{p(\sdp \mid \tilde{I}(t_1),\cdots,\tilde{I}(t_L))} \le \exp \left(\frac{n}{m}L\right),
\end{equation*}
which gives the mechanism generating $\sdp = (s_1,\cdots,s_L)$ satisfies $\epsilon$-DP, with $\epsilon = \frac{n}{m}L.$
\end{proof}

\section{Description of the SMC-ABC algorithm}\label{appendix:smcabc}
Algorithm \ref{alg:smc-abc} presents the extension of the SMC-ABC algorithm~\citep{sisson2007sequential,beaumont2009adaptive} to posterior estimation from the privatized data settings, by incorporating the private data simulator $f(x\mid\theta)$ and the privatized algorithm $\eta$.

\begin{algorithm}[htb]
\caption{Sequential Monte Carlo - Approximate Bayesian Computation (SMC-ABC)}\label{alg:smc-abc}
\begin{algorithmic}
\State \textbf{Input}: observed privatized summary statistics $\sdp^o$, discrepancy function $\rho$, thresholds $\{\epsilon_t\}$, transition kernels $\{K_t\}$, simulator $f(x \mid \theta)$, and privatized algorithm $\eta$
\For{$t=1,2,\dots,T$}
    \For{$i=1,2,\dots,N$}
        \If{$t=1$}
            \State Sample $\theta^{**} \sim \pi(\theta)$ independently
        \Else
            \State Sample $\theta^*$ from the previous population $\{\theta_{t-1}^{(i)}\}$ with weights $\{W_{t-1}^{(i)}\}$
            \State Perturb $\theta^*$ using transition kernel: $\theta^{**} \sim K_t(\theta \mid \theta^*)$
        \EndIf
        \Repeat
            \State Generate $x^{**} \sim f(x \mid \theta^{**})$
            \State Generate $\sdp^{**} \sim \eta(\sdp \mid x^{**})$
        \Until{$\rho(\sdp^{**}, \sdp^o) \leq \epsilon_t$}
        \State Set $\theta_t^{(i)} = \theta^{**}$
        \State Compute weights:
        \[
        W_t^{(i)} = 
        \begin{cases} 
            1, & \text{if } t=1 \\
            \frac{\pi(\theta_t^{(i)})}{\sum_{j=1}^{N} W_{t-1}^{(j)} K_t(\theta_t^{(i)} \mid \theta_{t-1}^{(j)})}, & \text{if } t>1
        \end{cases}
        \]
    \EndFor
    \State Normalize the weights so that $\sum_{i=1}^{N} W_t^{(i)} = 1$
    \State Compute effective sample size: 
    $ESS = \left[\sum_{i=1}^{N} (W_t^{(i)})^2\right]^{-1}$
    \If{$ESS < E$}
        \State Resample $\{\theta_t^{(i)}\}$ with weights $\{W_t^{(i)}\}$ to obtain a new population
        \State Set weights $\{W_t^{(i)} = 1/N\}$
    \EndIf
\EndFor \\

\Return posterior estimation $\{\theta_T^{(i)}, W_T^{(i)}\}_{i=1}^{N}$
\end{algorithmic}
\end{algorithm}

\section{Computational Complexity Analysis}\label{appendix:comp}

In this section, we analyze the computational complexity of the proposed (S)PLE and (S)PPE algorithms in detail. We begin by examining the computational cost of neural spline flows, which serve as the core density estimators in our framework.

\paragraph{Computational complexity for neural spline flows.}
Denote the neural spline flow \citep{durkan2019neural} as $q_\phi(y \mid z)$ with trainable parameters $\phi$. Let $n_l$ represent the number of flow layers, and each layer containing $n_h$ hidden layers of $n_u$ units to generate the parameters of $K$ knots for monotonic spline. Assuming the input dimensions of $y$ and $z$ are $d_y$ and $d_z$, respectively. Each flow layer processes the concatenated input $(y, z)$ and outputs $(3K-1)$ parameter for constructing the monotonic spline for each dimension of $y$ (see Appendix A.1 in \citet{durkan2019neural} for details). 

The computational complexity of evaluating gradient on one sample, i.e., $\nabla_\phi q_\phi(y_i \mid z_i)$, is then given by
\begin{equation}\label{eq:comp_nsf}
C_{\mathrm{NSF}} = \mathcal{O}\left(n_l \left[ (d_z + d_y) n_u + n_h n_u^2 + K d_y n_u \right]\right).
\end{equation}

\paragraph{Computational complexity for PLE.}
To minimize the PLE loss $\ell_{\mathrm{PLE}}(\phi)$, the gradient with respect to $\phi$ is computed over mini-batches of size $B$. For each pair $(\theta_i, x_i)$ in a mini-batch, the integral $I(\theta_i, x_i)$ from \eqref{eqn:variance-rqmc} is estimated using $M$ pseudo-samples $\{v^{(1)}, \dots, v^{(M)}\}$ generated via RQMC.

Considering a neural spline flow with $n_l$ layers, each containing $n_h$ hidden layers with $n_u$ units, the computational cost for processing a single mini-batch is then
\begin{equation}\label{eq:comp_ple}
C_{\mathrm{PLE}} = \mathcal{O}\left(n_l BM  \left[ (\mathrm{dim}(\theta) + \mathrm{dim}(s)) n_u + n_h n_u^2 + K\mathrm{dim}(s) n_u \right] + n\mathrm{dim}(\mathbb{X})BM \log M\right),
\end{equation}
where the last term represents the complexity of generating an RQMC-based estimator with $M$ samples. Specifically, the factor $M \log M$ accounts for the cost of generating an RQMC sequence, and $\mathrm{dim}(x) = n\mathrm{dim}(\mathbb{X})$ represents the dimension of the confidential data $x \in \mathbb{X}^n$ with sample size $n$.

\paragraph{Computational complexity for PPE.}
For the PPE method, the roles of the parameters $\theta$ and summary statistics $\sdp$ are reversed compared to PLE. To estimate the normalization constant $Z(\sdp, \phi)$ in \eqref{eq:tilde_q}, we using the atomic proposal method \citep{apt}, approximated as
\begin{equation*}
Z(\sdp, \phi) = \int_\Theta q_{\phi}(\theta \mid \sdp) \frac{\tilde{p}(\theta)}{\pi(\theta)} \mathrm{d}\theta \approx \frac{1}{n_a} \sum_{j=1}^{n_a} q_{\phi}(\theta_j \mid s_{\mathrm{dp},j}) \frac{1}{\pi(\theta_j)},
\end{equation*}
where $n_a$ is the number of atoms, and $\{\theta_j\}_{j=1}^{n_a}$ are subsamples under the current mini-batch, which can be regarded as the samples from proposal $\tilde p(\theta)$ (see Appendix A.2 in \citet{apt} for details).
The computational complexity for PPE is then given by
\begin{equation}\label{eq:comp_ppe}
C_{\mathrm{PPE}} = \mathcal{O}\left(n_l BM  \left[ (\mathrm{dim}(\theta) + \mathrm{dim}(s)) n_u + n_a n_h n_u^2 + K \mathrm{dim}(s) n_a n_u \right] + n\mathrm{dim}(\mathbb{X})BM \log M\right).
\end{equation}
Both PLE and PPE scale linearly with the posterior dimension $\mathrm{dim}(\theta)$. Notably, while they also scale linearly with the confidential data sample size $n$, this scaling is decoupled from the complexity of the flow structure, which directly models the conditional density between the summary statistics $s$ (or $\sdp$) and the model parameters $\theta$, rather than the high-dimensional confidential data itself.

\section{Experimental details}\label{appendix:exp_detail}

The training and inference processes of the methods were primarily implemented using the Pytorch package in Python.

\paragraph{Experimental setup.} 
We employed neural spline flows \citep{durkan2019neural} as the conditional density estimator, consisting of 8 layers. 
Following the flow structure settings in~\citep{lueckmann2021benchmarking}, each layer consists of two residual blocks with 50 units and ReLU activation function, with 10 bins in each monotonic piecewise rational-quadratic transforms, and the tail bound was set to 5.
Through empirical evaluation, we found that 8 layers provided sufficient flexibility for posterior estimation. Appendix \ref{sec:nsf_layer} compares 5-layer and 10-layer models, showing that 5 layers slightly degrade performance, while 10 layers perform similarly to 8 layers but with higher computational cost.

In the training process, the number of samples simulated in each round is $N = 1000$ and there are $R = 10$ rounds in total. In each round of training, we randomly select 5\% of the newly generated samples as validation data. According to the early stop criterion proposed by \citet{papamakarios2019sequential}, we stop training if the value of loss on validation data does not decrease after 20 epochs in a single round. For stochastic gradient descent optimizer, we choose the Adam \citep{kingma2014adam} with the batchsize of 100, the learning rate of $5\times 10^{-4}$ and the weight decay is $10^{-4}$.

\subsection{SIR model} 

\subsubsection{Detailed results on synthetic data}\label{sec:detailed_results_on_synthetic_data}
In our experiments, we use the Gillespie algorithm \citep{gillespie1977exact} to simulate the whole process over a duration of $T=160$ time units and record the populations at intervals of 1-time units. The prior distribution of $\beta$ is set to $\mathcal{N}(\log 0.4, 0.5)$ and prior distribution of $\gamma$ is set to $\mathcal{N}(\log 0.125, 0.2)$. The value of $K$ is configured as 1000000, while $N$ is set to 1000 for the number of observations.

To publish the privatized data about the infection process, we select the infectious group $I(t)$ at $L=10$ evenly-spaced points in time $[0, T]$, with the privacy parameters set to $n=1000$ and $m=1000$, which satisfied $\epsilon$-DP with $\epsilon=10$. The ground truth parameters are
\begin{equation*}
\theta^* = (\exp (-0.5),\ \exp (-3)),
\end{equation*}
and the observed private statistic $\sdp^o$ simulated from the model with ground truth parameters $\theta^*$ is
\begin{equation*}
\sdp^o = (0.0010,\ 0.0310,\ 0.6140,\ 0.2630,\ 0.1230,\ 0.0470,\ 0.0180,\ 0.0090,\ 0.0050,\ 0.0030).
\end{equation*}

Figure~\ref{fig:sir_scatter} illustrates the convergence of the approximate posterior by SPPE and SPLE in each round, and compares our results with the SMC-ABC method, where we performed simulations up to $5 \times 10^5$ times for the SMC-ABC method to generate the near exact `True' posterior. The tolerance parameter $\epsilon_t$ in SMC-ABC was selected as $\epsilon_t = 0.5 \times 0.7^t$, where $t$ represents the iteration round, and the discrepancy function $\rho$ was chosen as the $\ell_2$ norm. Figure~\ref{fig:sir_smc} depicts the results of the SMC-ABC method under the same performance metrics. Our method, after 10 rounds or $10^4$ simulations, achieved a similar performance as the SMC-ABC method with approximately $10^5$ simulations. The computational time costs comparison between our methods and SMC-ABC, as illustrated in Table~\ref{tab:cost}, also reveals that our approaches require significantly fewer simulations, resulting in substantially lower simulation time expenditures compared to the SMC-ABC method. However, our methods require extra time for the training of normalizing flows, a duration dependent on the flow's complexity. Overall, both SPPE and SPLE attain a low MMD more rapidly than SMC-ABC.

\begin{table}[ht]
\begin{center}
\footnotesize %
\caption{Computational Cost to achieve MMD $<$ 0.1 in the SIR Model (Mean $\pm$ Standard Deviation)}\label{tab:cost}
\begin{tabular}{c|ccc|c}
  Method & Simulation Time (min) & Network Training Time (min) & Total & Number of Simulations \\
  \hline
  SMC-ABC & 115.38 $\pm$ 1.51 & - & 115.38 $\pm$ 1.51 & 82.85 $\pm$ 1.03 \\
  SPPE & \textbf{2.65 $\pm$ 1.03} & \textbf{15.73 $\pm$ 7.67} & \textbf{18.38 $\pm$ 8.63} & \textbf{2.12 $\pm$ 0.71} \\
  SPLE & 7.11 $\pm$ 1.01 & 45.98 $\pm$ 14.46 & 53.09 $\pm$ 15.26 & 5.40 $\pm$ 0.77
\end{tabular}
\end{center}
\end{table}

\begin{figure}[ht]
    \centering
    \includegraphics[width=1.00\textwidth]{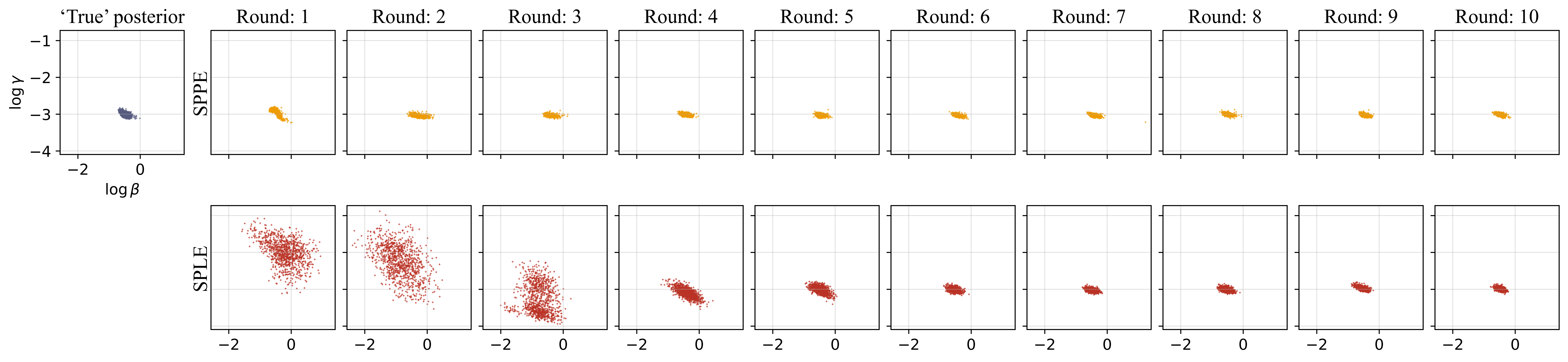}
    \caption{Detailed convergence of sequential posterior estimations given DP-protected infection trajectory under the SIR model. Each round entails $N = 1000$ simulations. 
    Orange: SPPE; red: SPLE; grey: SMC-ABC.}
    \label{fig:sir_scatter}
\end{figure}

\begin{figure}[ht]
    \centering
    \includegraphics[width=0.60\textwidth]{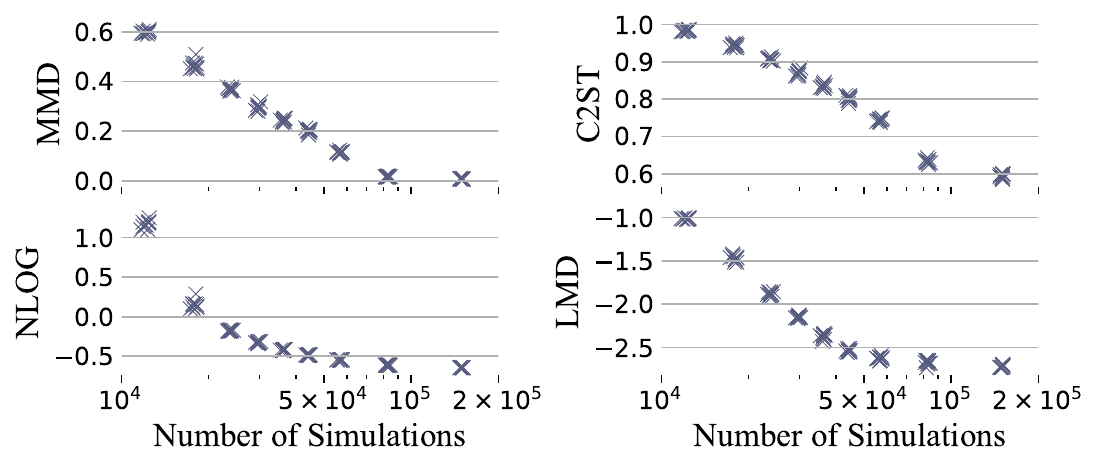}
    \caption{Approximation accuracy by SMC-ABC on the SIR model against the number of simulations.}
    \label{fig:sir_smc}
\end{figure}

Finally, we investigate the marginal posterior distributions $\pi(\beta \mid \sdp)$ and $\pi(\gamma \mid \sdp)$ in Figure~\ref{fig:sir_hist}, and conclude that SPPE and SPLE perform similarly well. In this example, the posterior approximated by SPPE is slightly more concentrated than those from SMC-ABC and SPLE.

\begin{figure}[htb]
    \centering
    \includegraphics[width=0.55\textwidth]{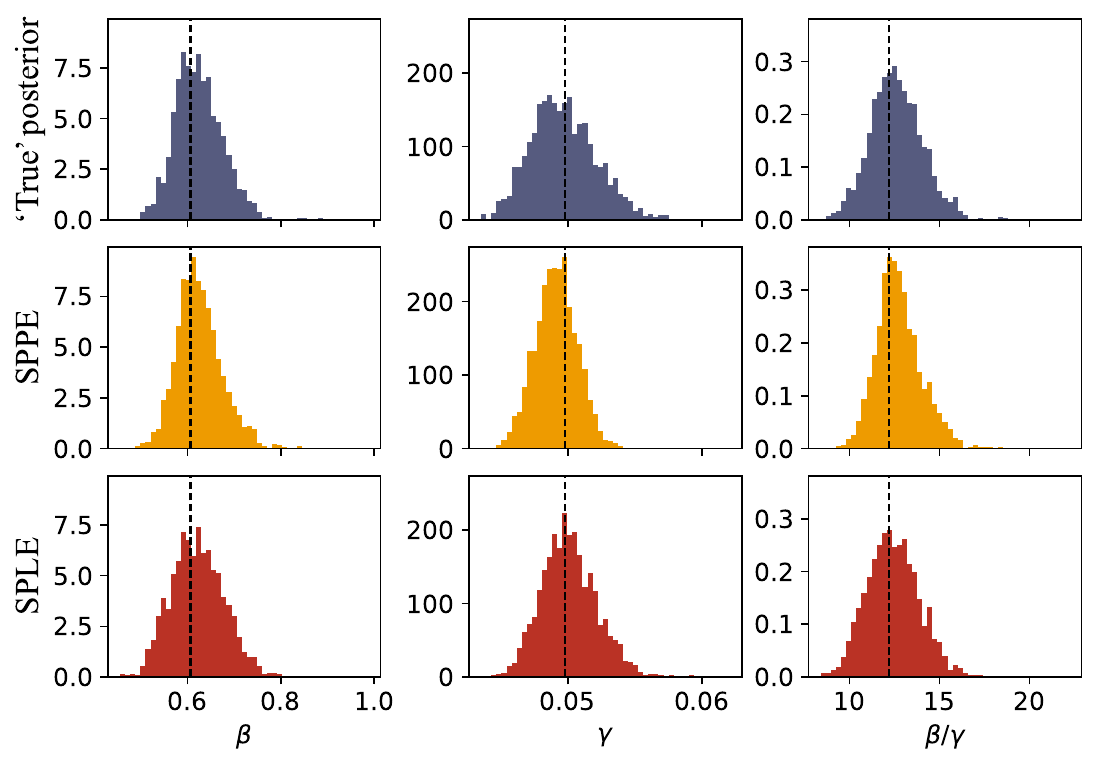}
    \caption{Marginal posterior histograms of $\beta, \gamma$, and $\beta/\gamma$ in SIR model on synthetic data. Grey: SMC-ABC; orange: SPPE; red: SPLE. The vertical lines indicate true data generating parameters, set to mimic a measles outbreak.}
    \label{fig:sir_hist}
\end{figure}

\begin{figure}[htb]
    \hspace{\fighspacea}
    \centering
    \includegraphics[width=0.55\textwidth]{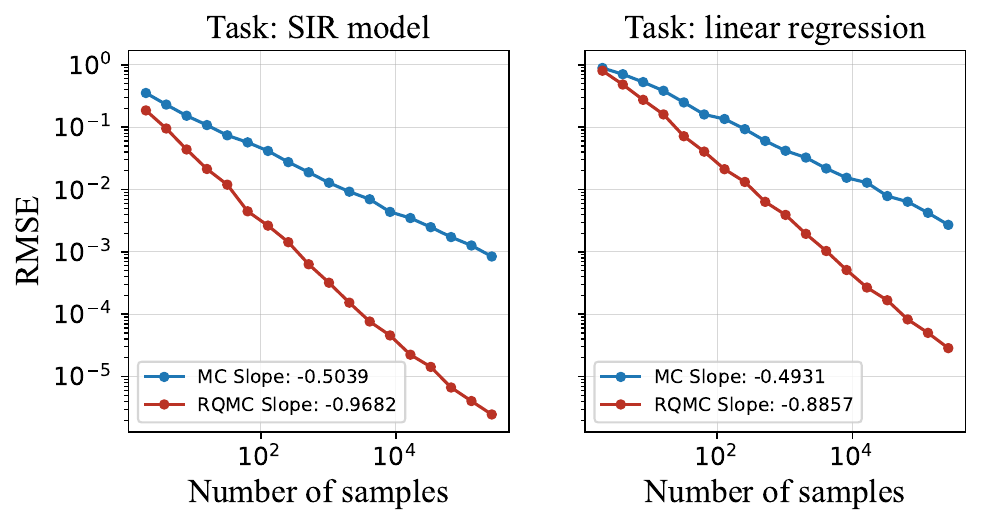}
    \caption{Rate of convergence of MC and RQMC: The $x$-axis represents the number of samples $M$ used for integral estimation, and the $y$-axis shows the logarithmic value of the root-mean-square-error (RMSE). The results indicate that the RMSE of the MC method is approximately $\mathcal{O}(M^{-1/2})$, while the RMSE of the RQMC method is approximately $\mathcal{O}(M^{-1})$.}
    \label{fig:rqmcmc}
\end{figure}

Finally, we evaluate the impact of the privacy loss budget on the performance of SPPE and SPLE. By setting the parameter $m$ to either $10^4$ or $10^5$, we can adjust the privacy loss budget to $\epsilon = 1$ or $\epsilon = 0.1$, respectively. Figure~\ref{fig:sir_diffeps} illustrates the posterior approximation accuracy under different $\epsilon$ values. SPPE demonstrates rapid and stable convergence, while SPLE requires approximately 5 training rounds to converge.

\begin{figure}[htb]
\hspace{0.0\textwidth}
\begin{subfigure}[t]{0.02\textwidth}
    \vspace{3pt}
    \textbf{A}
\end{subfigure}
\hspace{-0.01\textwidth}
\begin{subfigure}[t]{0.48\textwidth}
    \vspace{0pt}
    \includegraphics[width = 1\textwidth]{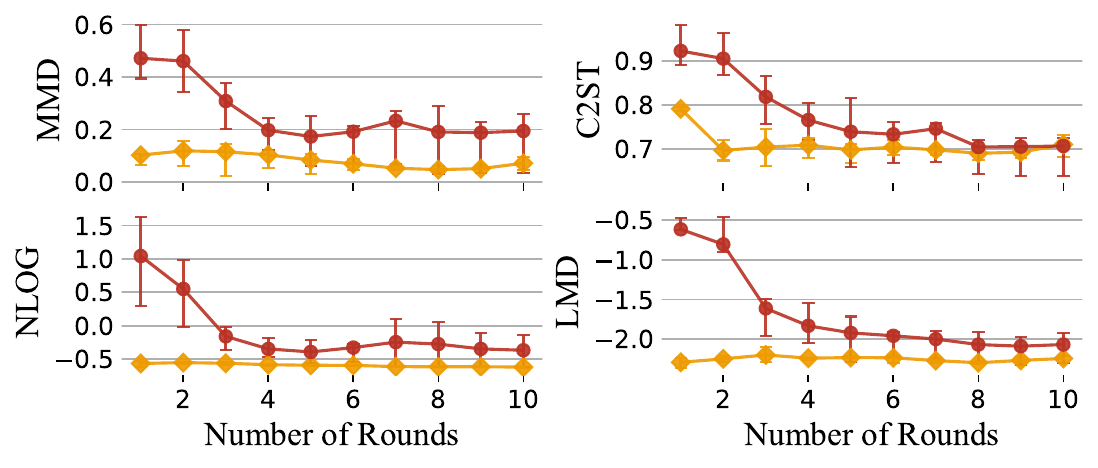}
\end{subfigure}
\begin{subfigure}[t]{0.02\textwidth}
    \vspace{3pt}
    \textbf{B}  
\end{subfigure}
\hspace{-0.01\textwidth}
\begin{subfigure}[t]{0.48\textwidth}
    \vspace{0pt}
    \includegraphics[width = 1\textwidth]{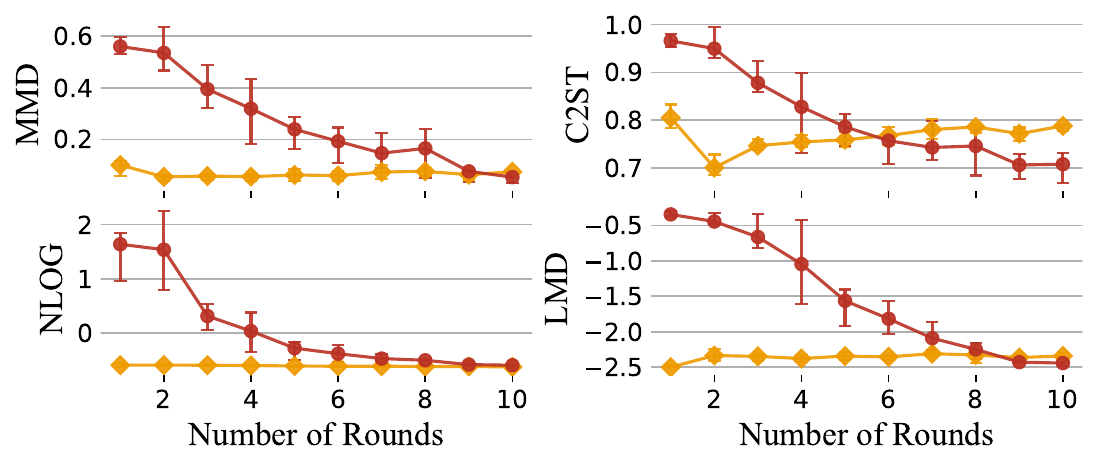}
\end{subfigure}
\caption{
Approximation accuracy by SPPE (orange) and SPLE (red) against the number of rounds with
\textbf{A.} Privacy loss budget $\epsilon = 1$.
\textbf{B.} Privacy loss budget $\epsilon = 0.1$.
the error bars represent the mean with the upper and lower quartiles.}
\vspace{-0.5em}
\label{fig:sir_diffeps}
\end{figure}

\subsubsection{Inference on real disease outbreaks}
We applied our inference methods (SPPE and SPLE) to analyze several real infectious disease outbreaks, namely influenza, Ebola, and COVID-19. For these experiments, we chose the privacy parameters $M=100$, $N=100$, and $L=10$, which result in an overall privacy loss budget $\epsilon=10$ according to Proposition~\ref{prop:dp_sir}.

\paragraph{influenza outbreak.} We utilized the dataset from a boarding school, obtained from \url{https://search.r-project.org/CRAN/refmans/epimdr/html/flu.html}. The total population in the school was $K=763$. We considered a daily time interval, and the observed private statistic $\sdp^o$ is
\begin{equation*}
s_{\mathrm{dp,flu}}^o = (0.0010,\ 0.0039,\ 0.0105,\ 0.0367,\ 0.0996,\ 0.2910,\ 0.3840,\ 0.3368,\ 0.3106,\ 0.2516).
\end{equation*}

\paragraph{Ebola outbreak in West Africa, 2014.} We analyzed datasets from three regions: a) Guinea, b) Liberia, and c) Sierra Leone. The dataset source is from \url{https://apps.who.int/gho/data/node.ebola-sitrep}. We assumed potential contact individuals of $K=100,000$. We selected 9 equally spaced time intervals of 120 days starting from 03/31/2014. The resulting observed private statistic $\sdp^o$ is as follows
\begin{align*}
s_{\mathrm{dp,(a)}}^o &= (0.0010,\ 0.0111,\ 0.0085,\ 0.0129,\ 0.0510,\ 0.0520,\ 0.0224,\ 0.0212,\ 0.0084,\ 0.0023). \\
s_{\mathrm{dp,(b)}}^o &= (0.0010,\ 0.0007,\ 0.0019,\ 0.0664,\ 0.2579,\ 0.0742,\  0.0610,\  0.0542,\ 0.0172,\ 0.0003). \\
s_{\mathrm{dp,(c)}}^o &= (0.0010,\ 0.0079,\ 0.0434,\ 0.2156,\ 0.2054,\ 0.0928,\ 0.0549,\ 0.0366,\ 0.0237,\ 0.0232).
\end{align*}

\paragraph{COVID-19.} We examined the COVID-19 dataset for Clark County, Nevada. See \url{https://usafacts.org/visualizations/coronavirus-covid-19-spread-map/state/nevada/county/clark-county/}. We assumed a potential contact population of $K=100,000$. We selected 9 equally spaced time intervals of 24 days, starting from 09/07/2020. To obtain the number of currently infected individuals, we calculated the difference in the total confirmed cases with a time interval of 14 days from the original dataset. The resulting observed private statistic $\sdp^o$ is:
\begin{equation*}
s_{\mathrm{dp,covid}}^o = (0.0010,\ 0.0281,\ 0.0566,\ 0.0978,\ 0.2108,\ 0.2443,\ 0.2574,\ 0.0864,\ 0.0434,\ 0.0263).
\end{equation*}

The numerical results are presented and compared in Figure 4 of the main text.

\subsection{Bayesian linear regression}\label{appendix:regression}

\paragraph{Data generating parameters. }
Following the parameters settings in \citet{ju2022data}, we model the predictors with $x_{0,i} \sim\mathcal{N}_p(m, \Sigma)$, where $\Sigma=I_n$ and $m=(0.9,\ -1.17)$. The outcomes $y$ satisfy $y\mid x_0\sim\mathcal{N}_n(x\beta, \sigma^2 I_n)$ where $\sigma^2=2$, and the prior for $\beta$ is independent normal $\mathcal{N}(0, 1)$. We set the privacy loss budget $\epsilon=10$ and the number of subjects $n=100$. The ground truth parameters are denoted as
\begin{equation*}
\theta^* = (-1.79,\ -2.89,\ -0.66).
\end{equation*}
We simulate the summary statistics $\tilde s$ from the model using the ground truth parameters $\theta^*$, resulting in
\begin{equation*}
\tilde s = \left(
\left(\begin{array}{c}
-0.3742 \\
-0.0629 \\
0.0299 \\
\end{array}\right),\
0.2499,\
\left(\begin{array}{ccc}
1.0000 & 0.0938 & -0.1270 \\
0.0938 & 0.0180 & -0.0094 \\
-0.1270 & -0.0094 & 0.0280 \\
\end{array}\right)
\right),
\end{equation*}
its corresponding vector form is
\begin{equation*}
\tilde s_{\mathrm{vec}} = (-0.3742,\ -0.0629,\ 0.0299,\ 0.2499,\ 0.0938,\ -0.1270,\ 0.0180,\ -0.0094,\ 0.0280).
\end{equation*}
Furthermore, the observed private statistic $\sdp^o$ is given by
\begin{equation*}
\sdp^o = \left(
\left(\begin{array}{c}
-0.3824 \\
-0.0667 \\
0.0320 \\
\end{array}\right),\
0.2720,\
\left(\begin{array}{ccc}
1.0000 & 0.0988 & -0.1385 \\
0.0988 & 0.0219 & -0.0229 \\
-0.1385 & -0.0229 & 0.0341 \\
\end{array}\right)
\right),
\end{equation*}
its corresponding vector form is
\begin{equation*}
s_{\mathrm{dp,vec}}^o = (-0.3824,\ -0.0667,\ 0.0320,\ 0.2720,\ 0.0988,\ -0.1385,\ 0.0219,\ -0.0229,\ 0.0341).
\end{equation*}

\paragraph{Experimental results. } 
In Figure~\ref{fig:linreg_accu}, we present a comparison of the performance of the SPPE and SPLE methods across different metrics as the number of simulation rounds increases. The SPLE method demonstrates a faster convergence in this task. Figure~\ref{fig:linreg_scatter} displays the posterior after 10 rounds, where both the SPPE and SPLE methods achieve results that are close to the near exact posterior obtained by the SMC-ABC method.

\begin{figure}[htb]
    \centering
    \includegraphics[width=0.65\textwidth]{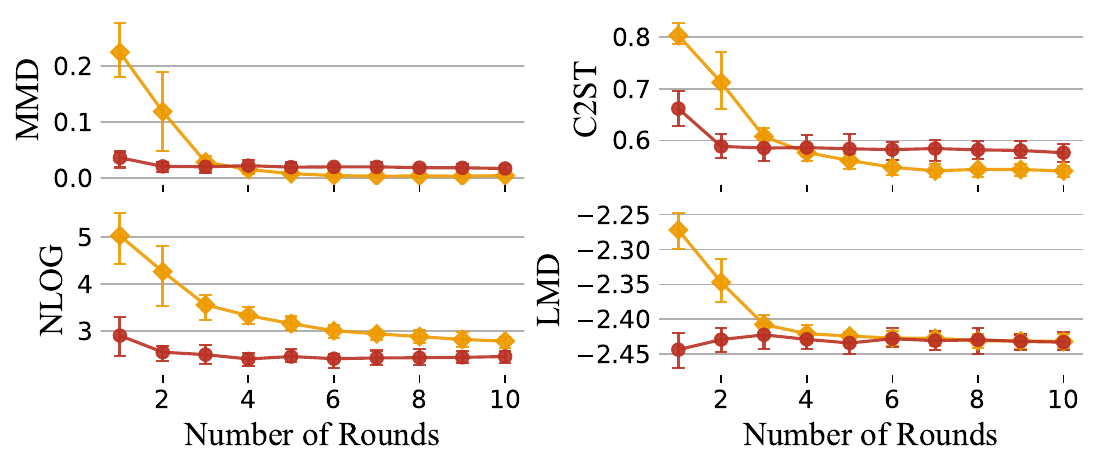}
    \caption{Approximation accuracy by SPPE (orange) and SPLE (red) on the Bayesian linear regression model against number of rounds, the error bars represent the mean with the upper and lower quartiles.}
    \label{fig:linreg_accu}
\end{figure}

\begin{figure}[htb]
    \centering
    \includegraphics[width=0.95\textwidth]{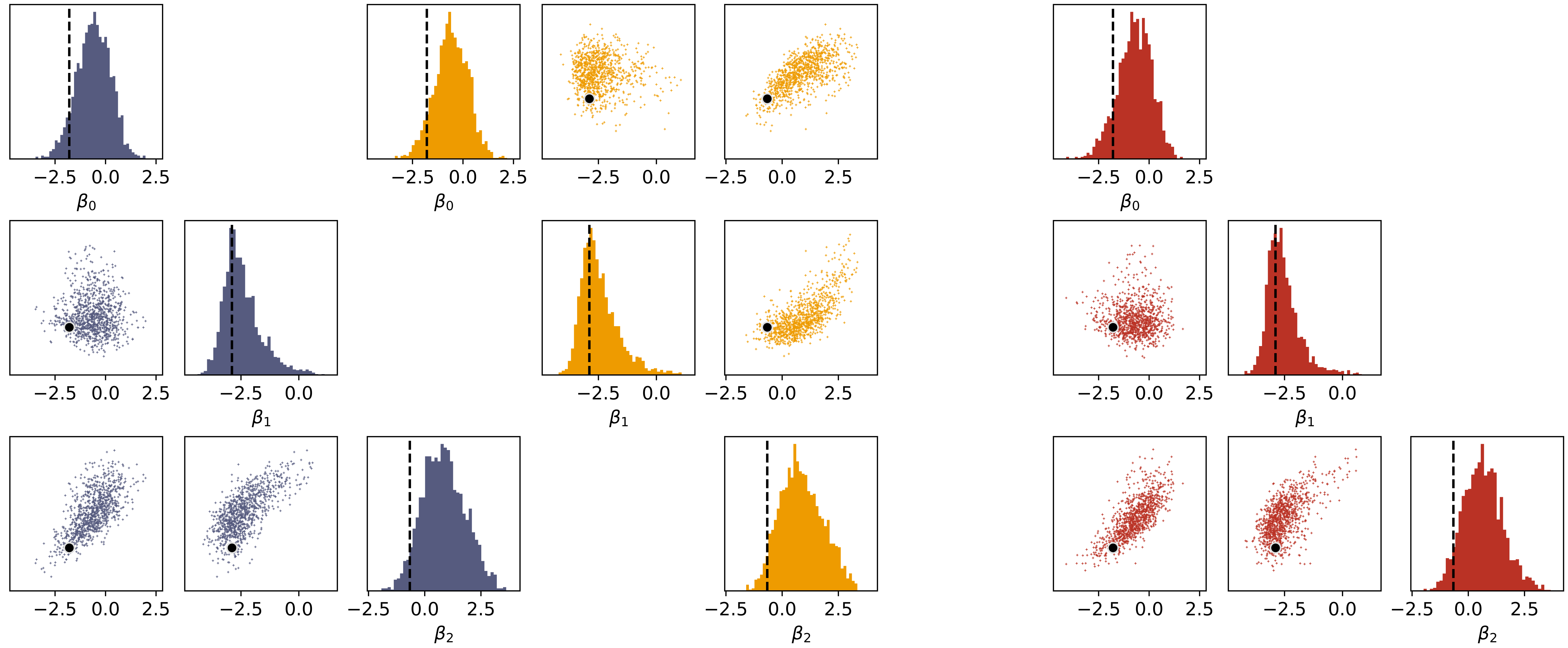}
    \caption{Posterior comparison on the Bayesian linear regression model. Grey: SMC-ABC; orange: SPPE; red: SPLE. The vertical lines and black dots indicate true data generating parameters.}
    \label{fig:linreg_scatter}
\end{figure}

To further characterize the privacy-utility trade-off, we compare the private data posterior distributions under several levels of privacy loss budget, in Figure~\ref{fig:regression_posterior}. 
The underlying confidential data $x$ is the same for each $\epsilon = 0.1,0.3,1,3,10,30,100.$ 
For each privacy loss level, we simulate one private summary $\sdp(\epsilon)$, and use SPPE and SPLE to approximate the private data posterior $\pi(\theta \mid \sdp; \epsilon).$
The two methods yield very similar results.
More interestingly, we use the same confidential data $x$ as~\citet{ju2022data} and the posterior distributions in Figure~\ref{fig:regression_posterior} follow a similar trend with that in Figure~2 of~\citet{ju2022data}.
For larger privacy loss budget (smaller noise), confidential data are mainly corrupted by clamping, and the proposed methods can elevate the effect of this censoring bias, as $\mathbb{E}(\theta \mid \sdp)$ is closer to the confidential data expectation $\mathbb{E}(\theta \mid x).$ 
As $\epsilon$ gets closer to 0 (near perfect privacy), the privacy mechanism has injected so much noise into $\sdp$ that the posterior distribution is more dispersed, and little information about $\theta$ can be learned based on observing $\sdp.$

\begin{figure}[htbp]
    \centering
    \includegraphics[width=1.00\textwidth]{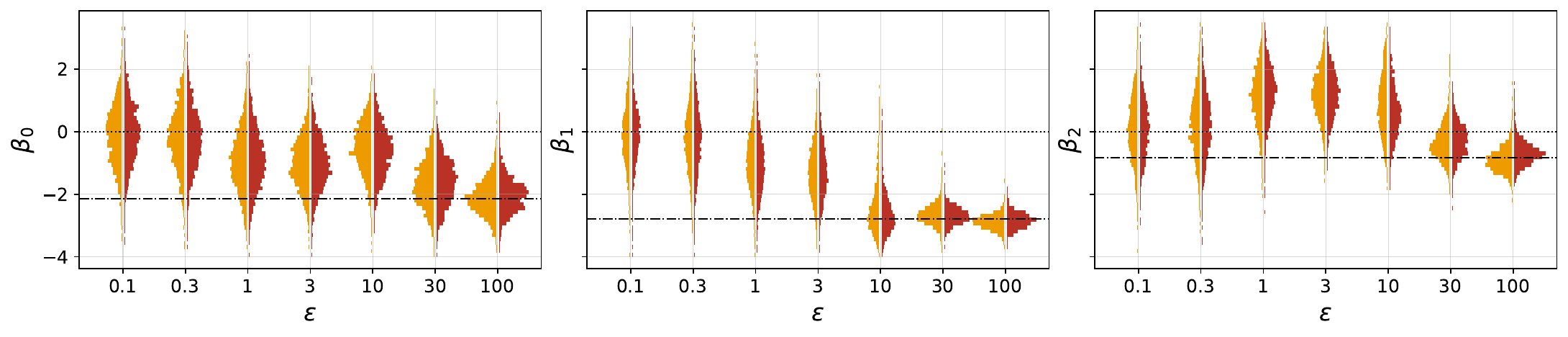}
    \caption{Marginal posterior histograms of $\theta = (\beta_0,\beta_1,\beta_2)$ given $\sdp$ generated with several levels of privacy loss budget on the same confidential data $x$. Orange: SPPE; red: SPLE. 
    The dash-dotted horizontal lines indicate the confidential data posterior means, and the dotted lines indicate prior means.
  }
    \label{fig:regression_posterior}
\end{figure}

Finally, we compare the posterior means and 95\% credible intervals obtained by different methods under various privacy loss budgets, as shown in Tables~\ref{tab:linreg},~\ref{tab:linreg_eps1}, and~\ref{tab:linreg_eps.1}. When larger noise is added, the posterior from all methods converge toward the prior of the parameters. Additionally, the naive posterior approximation fails to compute results due to the non-positive definiteness of the covariance matrix. The corresponding Kolmogorov-Smirnov test results, presented in Figure~\ref{fig:ks_test_appendix}, shows that RNPE exhibits a larger bias compared to other methods, while the results from DA-MCMC, SMC-ABC, SPPE, SPLE, and Gibbs-SS are closely aligned.

\begin{table}[htb]
\begin{center}
\caption{Estimated posterior mean and 95\% credible intervals for the linear regression example using various methods.
Here privacy loss budget is set to $\epsilon = 10$.
}\label{tab:linreg}
\footnotesize %
\begin{tabular}{c|ccc}
& $\beta_0$ & $\beta_1$ & $\beta_2$ \\
\hline \hline
Confidential Posterior given $x$ & -2.15 (-2.68, -1.61) & -2.79 (-3.08, -2.50) & -0.83 (-1.08, -0.58) \\
Naive posterior approximation & -4.63 (-5.04, -4.22) & -6.23 (-6.57, -5.90) & -5.10 (-5.40, -4.79) \\
\hline
DA-MCMC & -0.62 (-2.50, 0.99) & -2.72 (-3.74, -0.96) & 0.54 (-1.06, 2.46) \\
SMC-ABC & -0.59 (-2.28, 0.93) & -2.44 (-3.67, -0.38) & 0.85 (-0.88, 2.77) \\
SPPE & -0.51 (-2.25, 1.07) & -2.40 (-3.61, -0.30) & 0.90 (-0.89, 2.85) \\
SPLE & -0.64 (-2.31, 0.96) & -2.61 (-3.65, -0.98) & 0.64 (-0.93, 2.48) \\
Gibbs-SS & -0.46 (-2.22, 1.39) & -0.50 (-2.08, 1.28) & 0.41 (-1.68, 2.13) \\
RNPE & -1.40 (-3.59, 0.94) & -2.15 (-3.34, 0.51) & 0.29 (-2.11, 2.97)
\end{tabular}
\end{center}
\vspace{-0.5em}
\end{table}

\begin{table}[htb]
\begin{center}
\caption{Estimated posterior mean and 95\% credible intervals for the linear regression example using various methods.
Here privacy loss budget is set to $\epsilon = 1$.}
\label{tab:linreg_eps1}
\footnotesize %
\begin{tabular}{c|ccc}
& $\beta_0$ & $\beta_1$ & $\beta_2$ \\
\hline \hline
Confidential Posterior given $x$ & -2.15 (-2.68, -1.61) & -2.79 (-3.08, -2.50) & -0.83 (-1.08, -0.58) \\
Naive posterior approximation & - & - & - \\
\hline
DA-MCMC & -1.00 (-2.75, 0.85) & -0.96 (-2.66, 0.89) & 1.23 (-0.76, 2.88) \\
SMC-ABC & -0.80 (-2.74, 1.17) & -0.83 (-2.64, 1.23) & 0.90 (-1.00, 2.58) \\
SPPE & -0.93 (-2.80, 1.04) & -0.90 (-2.62, 0.97) & 1.11 (-0.71, 2.72) \\
SPLE & -0.89 (-2.69, 1.06) & -1.04 (-2.82, 0.91) & 1.25 (-0.55, 2.88) \\
Gibbs-SS & -0.92 (-2.72, 0.94) & -0.67 (-2.21, 1.04) & 1.04 (-0.62, 2.47) \\
RNPE & -1.37 (-3.81, 2.07) & -0.51 (-3.01, 2.53) & 1.34 (-1.77, 3.76)
\end{tabular}
\end{center}
\vspace{-0.5em}
\end{table}

\begin{table}[htb]
\begin{center}
\caption{Estimated posterior mean and 95\% credible intervals for the linear regression example using various methods.
Here privacy loss budget is set to $\epsilon = 0.1$.}
\label{tab:linreg_eps.1}
\footnotesize %
\begin{tabular}{c|ccc}
& $\beta_0$ & $\beta_1$ & $\beta_2$ \\
\hline \hline
Confidential Posterior given $x$ & -2.15 (-2.68, -1.61) & -2.79 (-3.08, -2.50) & -0.83 (-1.08, -0.58) \\
Naive posterior approximation & - & - & - \\
\hline
DA-MCMC & -0.12 (-1.98, 1.77) & -0.13 (-2.13, 1.88) & 0.13 (-1.91, 2.12) \\
SMC-ABC & -0.09 (-1.97, 1.93) & -0.10 (-2.09, 1.95) & 0.11 (-1.90, 2.22) \\
SPPE & -0.12 (-2.13, 1.73) & -0.12 (-2.11, 1.83) & 0.14 (-1.81, 2.09) \\
SPLE & -0.04 (-2.01, 1.99) & -0.06 (-2.00, 1.85) & 0.12 (-1.86, 2.09) \\
Gibbs-SS & -0.05 (-1.94, 1.95) & -0.01 (-1.95, 1.97) & 0.21 (-1.62, 1.89) \\
RNPE & -0.22 (-3.56, 3.16) & 0.13 (-3.12, 3.51) & 0.34 (-3.12, 3.61)
\end{tabular}
\end{center}
\vspace{-0.5em}
\end{table}

\begin{figure}[htbp]
\hspace{0.10\textwidth}
\begin{subfigure}[t]{0.02\textwidth}
    \vspace{3pt}
    \textbf{A}
\end{subfigure}
\begin{subfigure}[t]{0.78\textwidth}
    \vspace{0pt}
    \includegraphics[width = 1\textwidth]{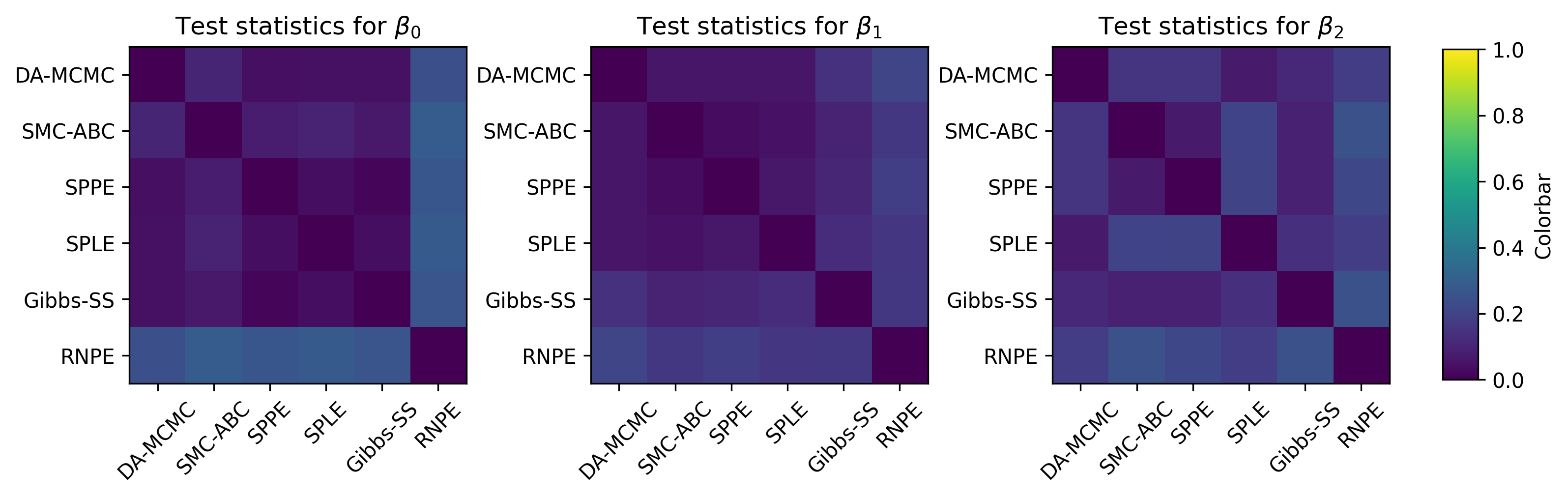}
\end{subfigure}

\hspace{0.10\textwidth}
\begin{subfigure}[t]{0.02\textwidth}
    \vspace{3pt}
    \textbf{B}
\end{subfigure}
\begin{subfigure}[t]{0.78\textwidth}
    \vspace{0pt}
    \includegraphics[width = 1\textwidth]{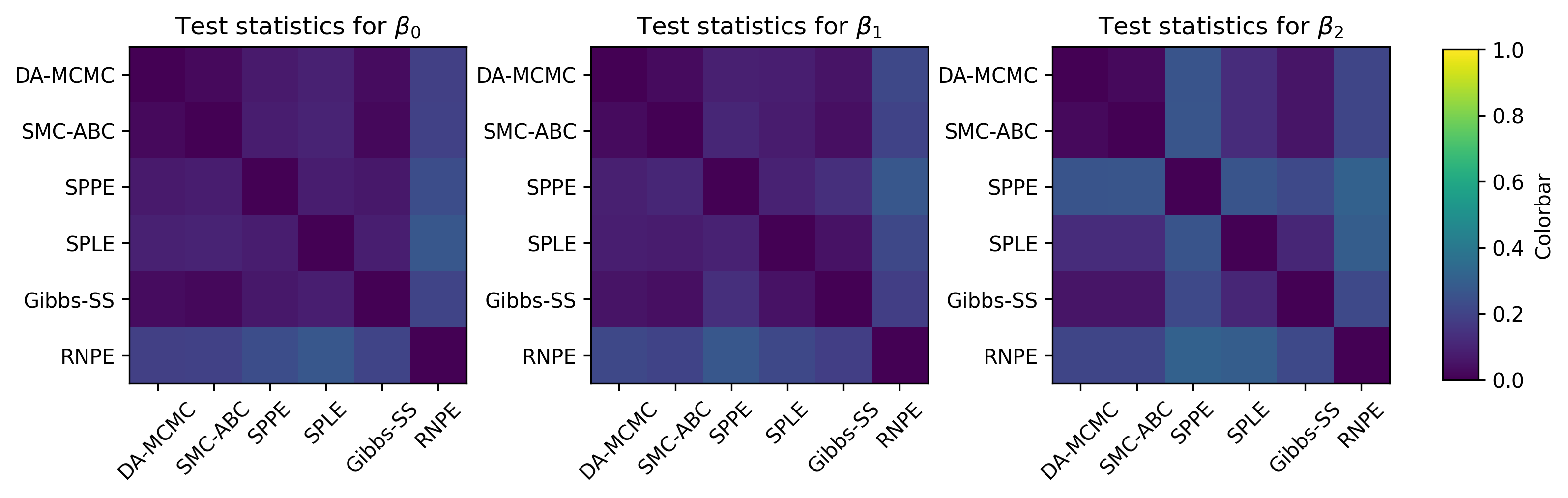}
\end{subfigure}
\caption{
Kolmogorov-Smirnov test statistics between approximations of posterior marginals.
\textbf{A.} Privacy loss budget $\epsilon = 1$.
\textbf{B.} Privacy loss budget $\epsilon = 0.1$.
}
\vspace{-0.5em}
\label{fig:ks_test_appendix}
\end{figure}

\subsection{Na\"ive Bayes log-linear model} 

\paragraph{Model description.}
The na\"ive Bayes log-linear model is a commonly used approach for modeling categorical data \citep{karwa2015private}. The input feature-vector, denoted as $x=(x_1,\cdots, x_K)$, consists of $K$ features, each taking values in the range $\{1,2,\cdots,J_k\}$. The output class, denoted as $y$, represents the target category and takes values in $\{1,2,\cdots, I\}$. The model assumes that the conditional probability of the input given the output, denoted as $p(x\mid y)$, can be factorized as the product of individual feature probabilities: $p(x\mid y)= \prod_{k=1}^K p(x_k\mid y)$. The model parameters are $p_{ij}^k$, which represents the probability $p(x_k=j\mid y=i)$, with prior $(p_{i, 1}^k, \cdots, p_{i, J_k}^k)\sim\mathrm{Dirichlet}(\alpha_{i, 1}^k, \cdots, \alpha_{i, J_k}^k)$ for all $i$ and $k$; and $p_i=p(y=i)$, with prior $(p_1,\cdots,p_I)\sim\mathrm{Dirichlet}(\alpha_1, \cdots, \alpha_I)$.

We assume that $(n_{i, 1}^k, \cdots, n_{i, J_k}^k)\sim\mathrm{Multinomial}(n_i; p_{i, 1}^k, \cdots, p_{i, J_k}^k)$ for all $i$ and $k$ and $(n_1,\cdots,n_I)\sim\mathrm{Multinomial}(n; p_1, \cdots, p_I)$. Here $n_{i,j}^k$ represents the counts $\#(y=i,x_k=j)$. One sufficient statistics of the model is the proportion of the counts $r_{i,j}^k:=\frac{1}{n}n_{i,j}^k$, where $n=\sum_{i=1}^I \sum_{j=1}^{J_k} n_{i,j}^k$ for all $k$.  To protect the privacy of the dataset, Laplace noise $e_{i,j}^k$ is added to the proportion of the counts, resulting in the privatized proportion $m_{i,j}^k=r_{i,j}^k+e_{i,j}^k$. When $e_{i,j}^k\sim\mathrm{Laplace}(0,\frac{2K}{n\epsilon})$, the released private statistic $\{m_{i,j}^k\}_{i,j,k}$ satisfied $\epsilon$-DP.

In our simulation, we set $\alpha_{i, j}^k=\alpha_i=2$ for all $i,j,k$, and $n=100$, with $I=2$, $K=2$ and $J_k=2$ for all $k$.
 The privacy loss budget $\epsilon=10$. The ground truth parameters are
\begin{align*}
&p_{1, 1}^1= 0.3887,\ p_{1, 2}^1= 0.6113,\ p_{1, 1}^2= 0.7537,\ p_{1, 2}^2= 0.2463,\\
&p_{2, 1}^1= 0.6534,\ p_{2, 2}^1= 0.3466,\ p_{2, 1}^2= 0.5834,\ p_{2, 2}^2= 0.4166,\\
&p_1= 0.8489,\ p_2 = 0.1511.
\end{align*}
and the observed private statistic simulated from the model with ground truth parameters are
\begin{align*}
&r_{1, 1}^1= 0.3275,\ r_{1, 2}^1= 0.4520,\ r_{1, 1}^2= 0.5862,\ r_{1, 2}^2= 0.1827,\\
&r_{2, 1}^1= 0.1293,\ r_{2, 2}^1= 0.0858,\ r_{2, 1}^2= 0.1288,\ r_{2, 2}^2= 0.0954.
\end{align*}

\paragraph{Experimental results.}
Figure~\ref{fig:loglinear_accu} illustrates the performance of the SPPE and SPLE methods across four different metrics, and Figure~\ref{fig:loglinear_hist} shows the marginal posterior histograms after 10 rounds, our methods stabilize in performance after round 3.

\begin{figure}[htb]
    \centering
    \includegraphics[width=0.65\textwidth]{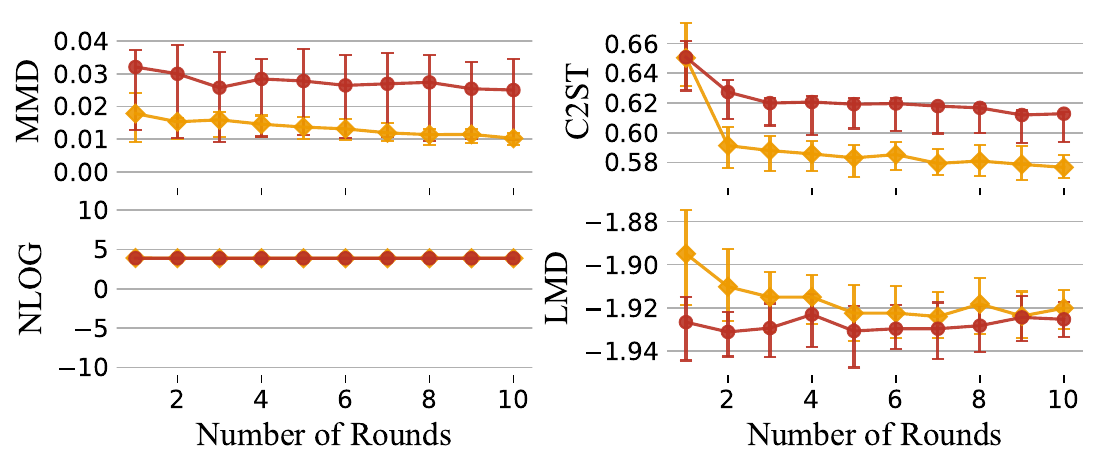}
    \caption{Approximation accuracy by SPPE (orange) and SPLE (red) on the log-linear model against the number of rounds, the error bars represent the mean with the upper and lower quartiles.}
    \label{fig:loglinear_accu}
\end{figure}

\begin{figure}[htb]
    \centering
    \includegraphics[width=1.00\textwidth]{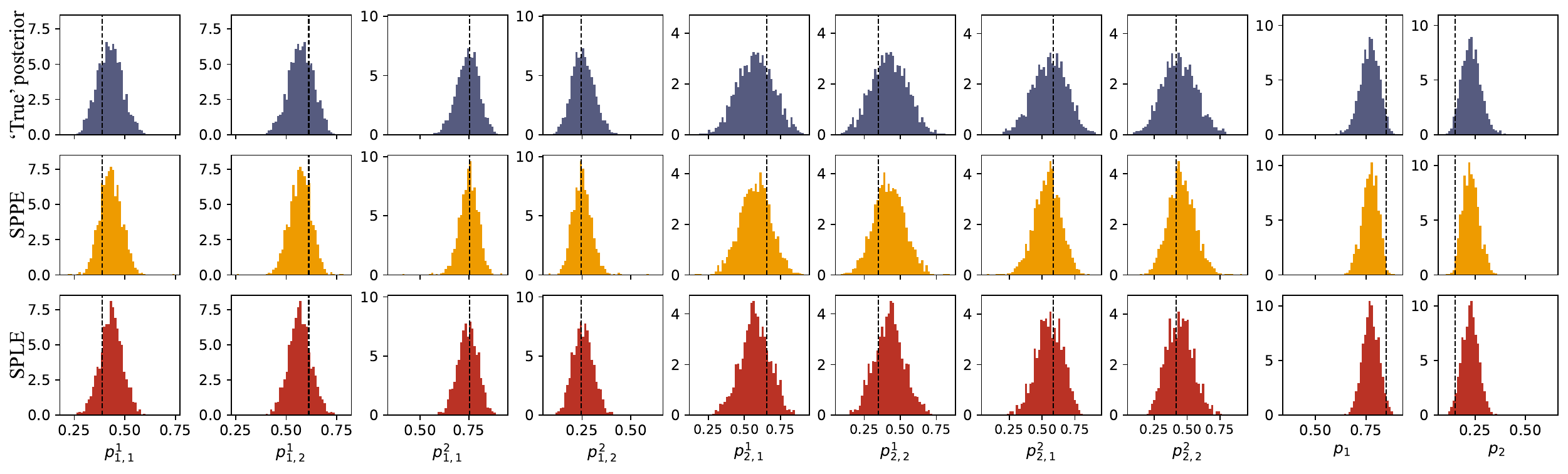}
    \caption{Marginal posterior histograms of the log-linear model. Grey: SMC-ABC; orange: SPPE; red: SPLE. The vertical lines indicate true data generating parameters.}
    \label{fig:loglinear_hist}
\end{figure}

\subsection{Neural spline flows with different layers}\label{sec:nsf_layer}

To evaluate the impact of varying the number of layers in the neural spline flow (NSF) model, we conducted additional experiments using configurations with 5 and 10 layers, in addition to the existing 8-layer setup.

The results are shown in Figure~\ref{fig:nsf_layers}. For the Bayesian linear regression task, the posterior approximation performance with 5-layer and 10-layer configurations is comparable to the 8-layer NSF. This indicates that a 5-layer NSF possesses sufficient flexibility to represent the posterior or likelihood. However, for the SIR model, the SPLE method with the 5-layer configuration converges slightly slowly compared to the 8-layer NSF, indicating that fewer layers may marginally affect convergence in some cases. Besides, as analyzed in Appendix~\ref{appendix:comp}, increasing the number of layers also results in a linear increase in computational cost.

\begin{figure}[htb]
\hspace{0.00\textwidth}
\begin{subfigure}[t]{0.02\textwidth}
    \vspace{3pt}
    \textbf{A}
\end{subfigure}
\begin{subfigure}[t]{0.48\textwidth}
    \vspace{0pt}
    \includegraphics[width = 1\textwidth]{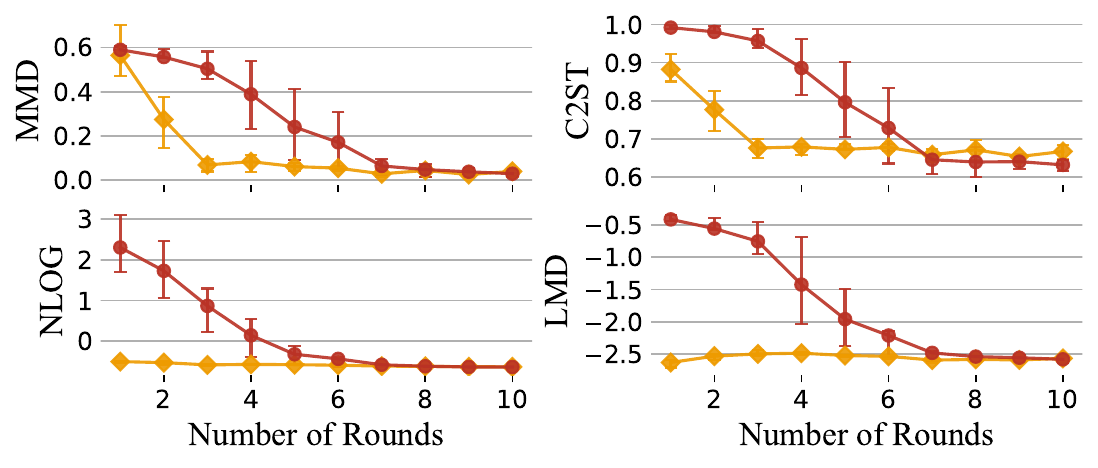}
\end{subfigure}
\hspace{0.00\textwidth}
\begin{subfigure}[t]{0.02\textwidth}
    \vspace{3pt}
    \textbf{B}
\end{subfigure}
\begin{subfigure}[t]{0.48\textwidth}
    \vspace{0pt}
    \includegraphics[width = 1\textwidth]{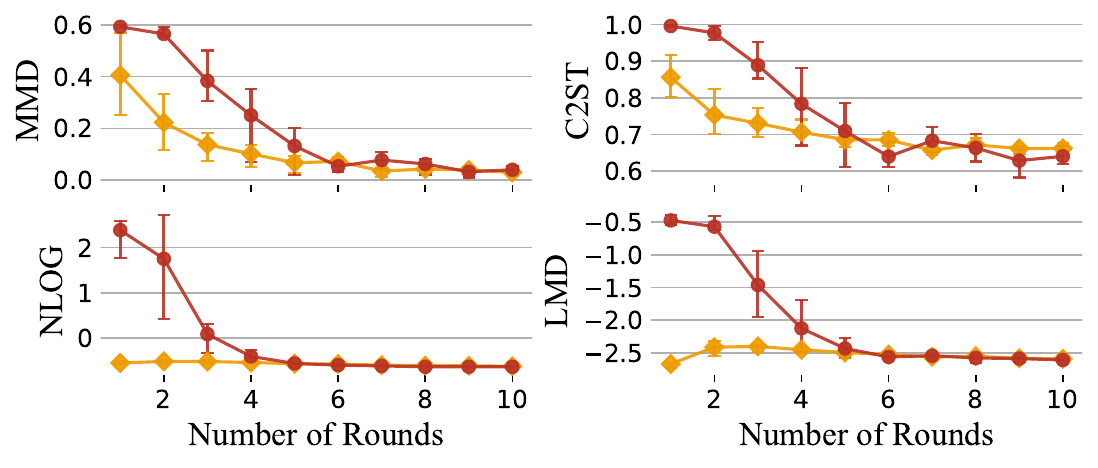}
\end{subfigure}

\hspace{0.00\textwidth}
\begin{subfigure}[t]{0.02\textwidth}
    \vspace{3pt}
    \textbf{C}
\end{subfigure}
\begin{subfigure}[t]{0.48\textwidth}
    \vspace{0pt}
    \includegraphics[width = 1\textwidth]{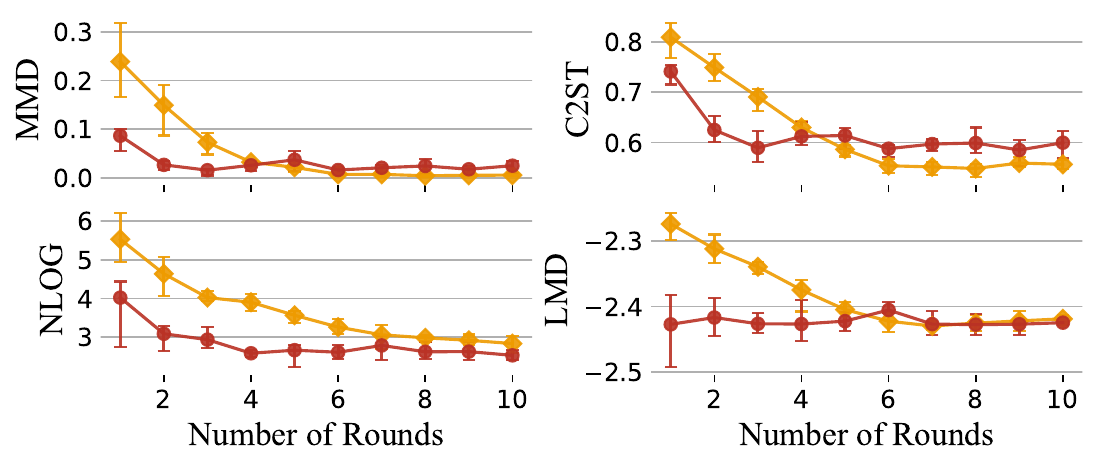}
\end{subfigure}
\hspace{0.00\textwidth}
\begin{subfigure}[t]{0.02\textwidth}
    \vspace{3pt}
    \textbf{D}
\end{subfigure}
\begin{subfigure}[t]{0.48\textwidth}
    \vspace{0pt}
    \includegraphics[width = 1\textwidth]{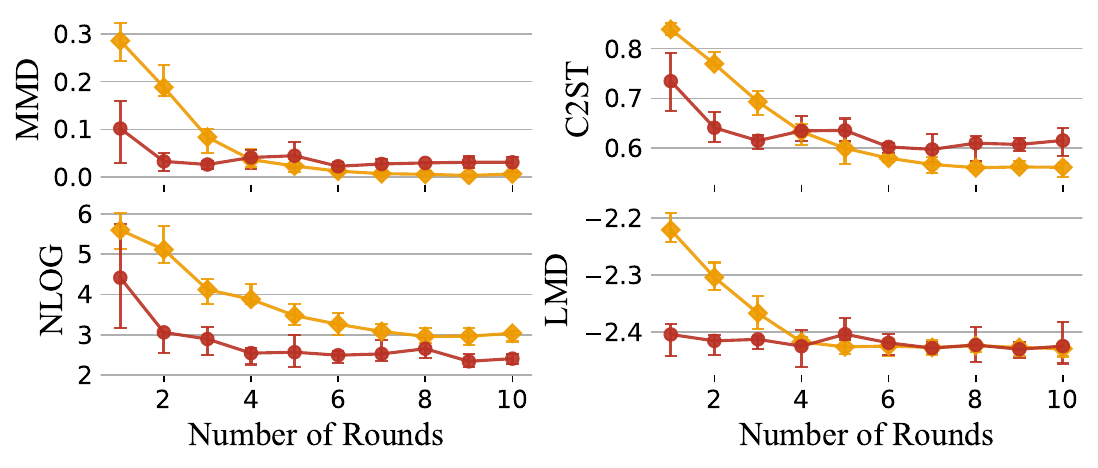}
\end{subfigure}
\caption{
Comparison of approximation performance for NSF models with different layer configurations by SPPE (orange) and SPLE (red).
\textbf{A.} 5-layer NSF on SIR model.
\textbf{B.} 10-layer NSF on SIR model.
\textbf{C.} 5-layer NSF on Bayesian linear regression
model.
\textbf{D.} 10-layer NSF on Bayesian linear regression
model.
}
\vspace{-0.5em}
\label{fig:nsf_layers}
\end{figure}

\section{Statement on Computing Resources} 

Our numerical experiments were conducted on a computer equipped with four GeForce RTX 2080 Ti graphics cards and a pair of 14-core Intel E5-2690 v4 CPUs. 

\end{document}